\documentclass[journal]{IEEEtran}
\usepackage{cite}
\usepackage{amsmath,amssymb,amsfonts}
\usepackage{algorithm,algorithmic}
\usepackage{graphicx}
\usepackage{hyperref}
\hypersetup{hidelinks=true}
\usepackage{textcomp}
\usepackage{xcolor}
\usepackage{wrapfig}
\usepackage{times}
\usepackage{tikz}
\usepackage{mathtools}
\usepackage{url}
\usepackage[normalem]{ulem}
\usepackage[T1]{fontenc}
\usetikzlibrary{automata, positioning, arrows.meta}
\newtheorem{definition}{Definition}
\newtheorem{theorem}{Theorem}
\newtheorem{remark}{Remark}

\newtheorem{example}{Example}
\newtheorem{lemma}{Lemma}

\newenvironment{proof}{\begin{IEEEproof}}{\end{IEEEproof}}

\newcommand{\zt}[1]{{\color{blue} #1}}
\renewcommand{\zt}[1]{#1}

\def\BibTeX{{\rm B\kern-.05em{\sc i\kern-.025em b}\kern-.08em
    T\kern-.1667em\lower.7ex\hbox{E}\kern-.125emX}}
\begin{document}
\title{Optimal Constrained sc\mbox{-}LTL Planning in MDPs via Switching Policies}

\author{Zetong Xuan and Yu Wang, \IEEEmembership{Senior Member, IEEE}
\thanks{This work was supported in part by the Army Research Office (ARO) under Grant W911NF-25-1-0243.}
\thanks{Zetong Xuan and Yu Wang are with the Department of Mechanical and Aerospace Engineering, University of Florida, Gainesville, FL 32611 USA (e-mail: z.xuan@ufl.edu; yuwang1@ufl.edu).}
\thanks{\copyright~2026 IEEE. Personal use of this material is permitted.
Permission from IEEE must be obtained for all other uses, in any current or
future media, including reprinting/republishing this material for advertising
or promotional purposes, creating new collective works, for resale or
redistribution to servers or lists, or reuse of any copyrighted component of
this work in other works. Accepted for publication in the IEEE Transactions on
Automatic Control; DOI: 10.1109/TAC.2026.3709203.}}

\maketitle
\begin{abstract}
We study the synthesis of optimal policies for planning problems on Markov decision processes with both objectives and safety constraints specified in co\mbox{-}safe linear temporal logic (sc\mbox{-}LTL). 
Our problems are inherently non\mbox{-}Markovian due to the complexity of the sc\mbox{-}LTL specification and may require policy randomization to balance the objective and constraint.
We propose a novel approach that reduces the constrained sc\mbox{-}LTL planning problem to a constrained reachability problem on an extended model. 
We then show that a class of switching policies constructed from stationary policies for the individual sc\mbox{-}LTL specifications is sufficient for optimality for the constrained reachability problem. 
Our finding enables a tractable linear program to compute the optimal policy.
A grid world case study demonstrates that our switching policies can achieve the optimal trade\mbox{-}off between the objective and the safety constraint and validates both optimality and tractability.
\end{abstract}

\begin{IEEEkeywords}
Temporal logic, Markov processes, Optimization, Automata.
\end{IEEEkeywords}

\section{Introduction}
\label{sec:intro}
\IEEEPARstart{M}{odern} autonomous systems often operate under dynamic and time\mbox{-}dependent safety constraints while accomplishing complex logic\mbox{-}based tasks~\cite{baier2008principles,fainekos2005temporal,kressgazit2009temporallogicbased}. 
A fundamental challenge lies in balancing task efficiency with formal safety guarantees, as these objectives may be contradictory~\cite{altman2021constrained}. 
\zt{In many safety\mbox{-}critical applications, such guarantees are naturally posed as probabilistic requirements~\cite{mallett2021progression, aksaray2021probabilistically, jha2018safe, dennisding2014hierarchical}, i.e., enforcing a prescribed bound on risk rather than demanding absolute safety in all executions.}
For instance, drones must periodically return to their base for inspections and battery recharging to ensure operational safety and reliability. 
However, achieving high efficiency in complex tasks like surveillance may justify tolerating a certain level of risk, including the potential loss of drones. 
This motivates the need for planning that maximizes task success subject to formal logic constraints.

We study constrained planning problems in which both the objective and \zt{a constraint specification} are expressed as co\mbox{-}safe linear temporal logic (sc\mbox{-}LTL)~\cite{kupferman2001model}. Here, ``safety'' refers to an operational requirement that is enforced as an inequality constraint with a probabilistic guarantee.
sc\mbox{-}LTL specializes standard linear temporal logic (LTL)~\cite{baier2008principles} to its co\mbox{-}safe fragment, {whose specifications can be witnessed by a finite prefix and are well suited for specifying mission goals and constraint requirements}.
For example, an sc\mbox{-}LTL specification can require sequential reachability, such as ``visit region $A$, then region $B$'' 
or conditional objectives, such as ``if region $C$ is visited, then region $D$ must be visited afterwards''. 
In a variety of planning settings, sc\mbox{-}LTL has been employed as task objectives, including in probabilistic systems such as MDPs and partially observable MDPs~\cite{ulusoy2012incremental,xuan2025control}, robotic planning~\cite{nenchev2016receding}, and hybrid dynamical systems~\cite{bhatia2010motion}.

Solving the constrained sc\mbox{-}LTL planning problem defined on Markov decision processes (MDPs) is challenging because the optimal solution can be a stochastic and non\mbox{-}Markovian policy~\cite{altman2021constrained, ding2014optimal}.
Stochastic policies are often required to balance conflicting objectives (e.g., task success vs.\ safety)~\cite{altman2021constrained}, which contrasts with unconstrained sc\mbox{-}LTL planning where a purely deterministic policy can be optimal.
The non\mbox{-}Markovian nature arises from two sources. 
First, since both the objective and the constraints are specified using sc\mbox{-}LTL specifications that involve complex sequential ordering, the optimal policy must depend on more than just the current state. 
Second, even when both the objective and the constraint reduce to simple reachability tasks, the optimal policy may still be non\mbox{-}Markovian. 
This is because satisfying one sc\mbox{-}LTL specification may depend on a finite prefix that overlaps or conflicts with the prefix required to satisfy the other.

Despite the complexity of the optimal policy, we show that it suffices to consider a simple switching structure.
In particular, an optimal solution for constrained sc\mbox{-}LTL planning can be taken to be a switching policy with three modes:
(1) a mixed policy that balances the objective and safety constraint,
(2) a pure optimistic policy that prioritizes the objective, and 
(3) a pure pessimistic policy that prioritizes safety. 
The mixed policy addresses the inherent trade\mbox{-}off between objective achievement and constraint satisfaction, while the switching mechanism exploits the fact that sc\mbox{-}LTL specifications can be satisfied by a finite prefix of a path. 
Once one specification is satisfied, the policy can switch to focusing exclusively on the other.

Building on this insight into the structure of the optimal policy, we develop a tractable solution pipeline for constrained sc\mbox{-}LTL planning. 
\zt{
After introducing preliminaries in Section~\ref{sec:prelim} and stating the problem formally in Section~\ref{sec:motivation}, we proceed as follows. 
}
First, in Section~\ref{sec:4}, we reduce the problem to a constrained reachability problem on an extended model of the MDP. 
This reduction ensures that both the optimistic and pessimistic policies become stationary deterministic policies in the extended model, making them easy to compute. 
Second, in Section~\ref{sec:switching}, we prove that a switching policy, constructed by switching among the mixed, optimistic, and pessimistic policies (all stationary in the extended model), is sufficient for optimality. 
This result lies outside the scope of existing constrained MDP theory~\cite{altman2021constrained}, whose standard assumptions, such as a discount factor or an absorbing MDP with a single target set, do not hold in our case. 
Finally, in Section~\ref{sec:LP}, we develop an algorithm based on linear programming (LP) to compute the optimal switching policy in the extended model, and in Section~\ref{sec:case} we validate the approach on a grid world case study. 
The policy is then mapped back to the original MDP, yielding the desired non\mbox{-}Markovian optimal policy.

\subsection*{Related work}
Our work addresses constrained sc\mbox{-}LTL planning on finite\mbox{-}state MDPs, without assuming the feasibility of the constraint, any particular dependency between objective and constraint formulation, or structural assumptions on the dynamics. 
In comparison, most existing works on planning with temporal logic constraints either consider temporal logic only as a constraint (not as an objective), assume a fixed dependency between the objective and constraint specifications, or focus on specific dynamical systems~\cite{ulusoy2012incremental, ding2014optimal, smith2011optimal, nenchev2016receding, bhatia2010motion, hammond2021multiagent, bozkurt2021modelfree, jha2018safe, liu2022recurrent}.

Early studies focus on minimizing cost while strictly satisfying a temporal logic specification. 
Representative works include incremental or optimal control for MDPs and multi\mbox{-}agent systems~\cite{ulusoy2012incremental, ding2014optimal, smith2011optimal}, as well as approximate or sampling\mbox{-}based dynamic programming methods for scalability~\cite{li2017samplingbased, li2019approximate, deng2017approximate}. 
These approaches typically assume that the temporal logic specification is feasible and must be satisfied almost surely.

Alternatively, some works consider scenarios where the temporal logic constraints are soft and can be relaxed. 
In this setting, the constraint is either relaxed or replaced with a cost\mbox{-}based surrogate. 
For example, \cite{cai2023optimal} proposes a relaxed product MDP specification that balances cost, satisfaction probability, and violation penalty.
Similarly, several works~\cite{cohen2023temporal, li2019approximate, aksaray2021probabilistically} incorporate cost\mbox{-}based constraints when temporal logic specifications cannot be fully enforced. 
These works do not consider multiple temporal logic specifications at the same time, and thus are less expressive than our formulation.

Other related works consider planning problems where both the objective and the constraint are specified by temporal logic specifications.
However, their satisfaction outcomes are not independent but often semantically coupled, and thus cannot represent cases where the objective and constraint are logically independent. 
\cite{kretinsky2018learningbased} studied model\mbox{-}based reinforcement learning for mean\mbox{-}payoff optimization under $\omega$\mbox{-}regular constraints, which contains LTL specifications. 
\cite{hammond2021multiagent} proposed a model\mbox{-}free reinforcement learning framework that maximizes a weighted sum of satisfaction probabilities over multiple LTL objectives. 
\cite{bozkurt2021modelfree} considered lexicographic preferences among temporal logic specifications, where safety is prioritized over task completion and reward collection.

Finally, there is related research on signal temporal logic planning for continuous\mbox{-}time systems. However, these works typically assume deterministic dynamics and thus cannot be generalized to finite\mbox{-}state MDPs.
Examples include safe control under signal temporal logic constraints~\cite{jha2018safe} and recurrent\mbox{-}set based synthesis~\cite{liu2022recurrent}.

\section{Preliminaries} \label{sec:prelim}

This section introduces preliminaries on labeled Markov decision processes and co\mbox{-}safe linear temporal logic. 

\subsection{Labeled Markov Decision Processes}\label{sec:mdp} 

We use labeled Markov decision processes (LMDPs) to model planning problems where each decision has a potentially probabilistic outcome. 
LMDPs augment standard Markov decision processes \cite{baier2008principles} with state labels, enabling the assignment of properties, such as safety and liveness, to a sequence of states. 

\begin{definition} \label{def:mdp}
A labeled Markov decision process is a tuple $\mathcal{M} = (S, A, T, \beta, \allowbreak \Lambda, L)$ where
\begin{itemize}
    \setlength{\itemsep}{0pt}
    \item $S$ is a finite set of states, 
    \item $A$ is a finite set of actions where $A(s)$ denotes the set of allowed actions in state $s\in S$,
    \item $T: S \times A \times S \to [0,1]$ is the transition probability function such that for all $s\in S$, 
    \[
        \sum_{s'\in S}T(s,a,s') = \begin{cases}
            1, & a \in A(s) \\
            0, & a \notin A(s)
        \end{cases},
    \]
    \item $\beta \in \Delta(S)$ is the initial distribution, here we denote the set of probability distributions on $S$ by $\Delta (S)$,  
    \item $\Lambda$ is a finite set of atomic propositions, 
    \item $L: S \to 2^{\Lambda}$ is a labeling function.
\end{itemize}
\end{definition}

The set of atomic propositions $\Lambda$ and the function $L$ of the model $\mathcal{M}$ enable high\mbox{-}level symbolic descriptions of objectives and constraints that can reflect the context of the planning tasks.
A path of the LMDP $\mathcal{M}$ is an infinite state sequence $\sigma = s_0 s_1 s_2 \cdots$ such that for all $t \ge 0$, there exists $a_t \in A(s_t)$ with $T(s_t,a_t,s_{t+1}) > 0$.
The semantic path corresponding to $\sigma$ is given by $L(\sigma) = L(s_0)L(s_1)\cdots$, derived using the labeling function $L$.
Given a path $\sigma$, the $t$-th state is denoted by $\sigma[t] = s_t$.
We denote the prefix by $\sigma[{:}t] = s_0 s_1\cdots s_t$ and the suffix by $\sigma[t{:}] = s_t s_{t+1}\cdots$.

We introduce several classes of policies to facilitate further discussion.
A policy is a function that maps a finite state\mbox{-}action prefix
$h_t=(s_0,a_0,\ldots,s_t)$ and the current time step $t$ to a distribution over actions in $A(s_t)$.
When the input is restricted to the current state $s_t$ only, we say the policy is \emph{stationary}.
When the input is restricted to the current state $s_t$ and the current time step $t$, we say the policy is \emph{Markovian}.
Otherwise, we say the policy is \emph{non\mbox{-}Markovian}.
Moreover, if the output distribution is always concentrated on a single action, the policy is said to be deterministic. 
Otherwise, we say the policy is stochastic.

Given a policy $\pi$, the interconnection of $\mathcal M$ and $\pi$ induces a closed\mbox{-}loop stochastic process, denoted by $\mathcal M_\pi$, which generates a random infinite state\mbox{-}action sequence $(s_0,a_0,s_1,a_1,\ldots)$.
Specifically, $s_0$ is drawn from the initial distribution $\beta$.
At each time $t\ge 0$, given the finite state\mbox{-}action prefix $h_t=(s_0,a_0,\ldots,s_t)$, an action $a_t$ is drawn from $\pi(h_t,t)$, and the next state $s_{t+1}$ is drawn according to $T(s_t,a_t,\cdot)$.
Note that the induced state sequence is Markovian only when $\pi$ is Markovian.

The above generative rule uniquely determines probabilities of all events about infinite state\mbox{-}action trajectories that can be specified by finite prefixes, and hence also a well\mbox{-}defined probability distribution over infinite state paths by ignoring the actions.
In particular, the probability of any finite state\mbox{-}action prefix
$h_{t+1}=(s_0,a_0,\ldots,s_t,a_t,s_{t+1})$ is defined recursively by
\begin{align}
P_\pi(s_0) &:= \beta(s_0), \notag\\
P_\pi(h_{t+1}) &:= P_\pi(h_t)\,\pi(h_t,t)(a_t)\,T(s_t,a_t,s_{t+1}). \notag
\end{align}
A (sample) path generated by the closed\mbox{-}loop process is denoted by $\sigma \sim \mathcal M_\pi$.

\subsection{Co\mbox{-}safe Linear Temporal Logic and Deterministic Finite Automata} \label{subsec:sc-LTL}
In an LMDP $\mathcal{M}$, whether a given semantic path $L(\sigma)$ satisfies a property such as avoiding unsafe states can be expressed using linear temporal logic.
LTL can specify the change of labels along the path by connecting Boolean variables over the labels with two propositional operators, negation $(\neg)$ and conjunction $(\wedge)$, two temporal operators, next $(\bigcirc)$ and until $(\mathcal{U})$. 

\begin{definition} \label{def:ltl} 
The LTL specification is defined by the syntax
\begin{align}
\varphi ::= {\rm true}\,|\, \alpha \,|\, \varphi_1 \wedge \varphi_2\,|\,\neg \varphi \,|\,\bigcirc \varphi\,|\, \varphi_1\, \mathcal{U} \varphi_2, \quad \alpha \in \Lambda.
\end{align}
Satisfaction of an LTL specification $\varphi$ on a path $\sigma$ of an MDP (denoted by $\sigma \models \varphi$) is defined inductively as:
${\rm true}$ is satisfied on $\sigma$;
$\alpha$ is satisfied on $\sigma$ if $\alpha\in L(\sigma[0])$;
$\varphi_1 \wedge \varphi_2$ is satisfied on $\sigma$ if $\sigma\models \varphi_1$ and $\sigma\models \varphi_2$;
$\neg \varphi$ is satisfied on $\sigma$ if $\sigma\not\models \varphi$;
$\bigcirc \varphi$ is satisfied on $\sigma$ if $\sigma[1{:}] \models \varphi$;
$\varphi_1\, \mathcal{U} \varphi_2$ is satisfied on $\sigma$ if there exists $t$ such that $\sigma[t{:}] \models \varphi_2$ and for all $\tau<t,\,\sigma[\tau{:}]\models \varphi_1$.
\end{definition}

Other propositional and temporal operators can be derived from previous operators, e.g., (or) $\varphi_1 \vee \varphi_2 \coloneqq \neg(\neg \varphi_1 \wedge \neg \varphi_2)$, (eventually) $\lozenge \varphi \coloneqq \mathrm{true}\,\mathcal{U} \varphi$, and (always) $\square \varphi \coloneqq \neg \lozenge \neg \varphi$.

In this work, we focus on the co\mbox{-}safe fragment of the LTL specification, which can be verified via finite prefixes.
We say the LTL specification $\varphi$ is co\mbox{-}safe \cite{lacerda2014optimal} if all the infinite sequences $\sigma \models \varphi$ satisfy:
there exists a truncated finite prefix $\sigma[{:}t]=s_0s_1\cdots s_t$, $t\in \mathbb{N}$ such that $\sigma[{:}t] \circ \sigma' \models \varphi$ for any infinite sequence $\sigma'$, where $\circ$ denotes concatenation.

We use deterministic finite automata (DFAs) as a graph representation of an sc\mbox{-}LTL objective, thus simplifying the complex rule\mbox{-}based objective into reachability. 
Given an sc\mbox{-}LTL objective $\varphi$, we can construct a DFA $\mathcal{A}_\varphi$ (with alphabet $\Sigma=2^{\Lambda}$) such that a path $\sigma \models \varphi$ if and only if $\sigma$ is accepted by the DFA $\mathcal{A}_\varphi$ which has a reachability objective.
\begin{definition}
A deterministic finite automaton is a tuple $\mathcal{A} = (Q, \Sigma,\delta, q_\mathrm{init}, Z)$  where
\begin{itemize}
    \item $Q$ is a finite set of automaton states, 
    \item $\Sigma$ is a finite alphabet, 
    \item $\delta: Q \times  \Sigma \to Q$ is a (partial) transition function, 
    \item $q_\mathrm{init} \in Q$ is the initial automaton state, 
    \item $Z\subseteq Q$ is a set of final automaton states. 
\end{itemize}
\end{definition}
The DFA verifies the path $\sigma$ by its semantic path $L(\sigma)$. 
Each label on the path moves the automaton state through transition $\delta$.  
A path $\sigma$ is accepted by $\mathcal{A}_\varphi$ if and only if there exists $t\in \mathbb{N}$ such that the prefix $\sigma[{:}t]$ moves the initial automaton state $q_\mathrm{init}$ to a final state $q\in Z$. 
Moreover, accepting states are absorbing, consistent with the co\mbox{-}safe semantics that satisfaction is certified by a finite prefix.

We use $ P_{\pi} (\sigma \models {\varphi} )$ to denote the probability {that a path $\sigma$ generated under policy $\pi$ satisfies $\varphi$.}
For a state $s$, we introduce the shorthand $ P_{\pi} (s \models  {\varphi} ) :=  P_{\pi} (\sigma \models {\varphi} \vert\, \sigma [0] = s)$.
\section{Problem Formulation}\label{sec:motivation}

We consider planning problems where both the objective and the safety constraint are specified using co\mbox{-}safe LTL specifications over system paths. Formally, the problem is defined as
\begin{align}
\textbf{Objective:} & \quad \max_{\pi \in \Pi} \; P_{\pi} (\sigma \models \phi )
\label{eq:objective}
\\
\textbf{Constraint:} & \quad P_{\pi} (\sigma \models \psi ) \geq \tau, 
\label{eq:constraint}
\end{align}
where $\Pi$ is the set of all admissible policies.\footnote{This formulation is not equivalent to planning for the single conjunctive specification $\phi\wedge\psi$ in general. Optimizing $P_{\pi}(\sigma\models(\phi \wedge \psi))$ does not necessarily enforce $P_{\pi}(\sigma\models\psi)\ge\tau$.}

This problem is challenging because the optimal policy may be both non\mbox{-}Markovian and stochastic. 
Existing methods, such as dynamic programming and reinforcement learning, which search over the space of Markovian policies, are fundamentally incapable of synthesizing such policies~\cite{altman2021constrained, sutton2018reinforcement}.

We next present a minimal example that demonstrates how, even in simple settings, the optimal policy may require history dependence and randomization, highlighting the need to consider non\mbox{-}Markovian and stochastic policies in the general case.

\begin{example} \label{ex:1}
Consider an MDP with five states and initial distribution $\beta(s_{\mathrm{init},1})=\beta(s_{\mathrm{init},2})=0.5$. 
The transition dynamics are shown in Fig.~\ref{fig:aligned-mdp}.
From $s_{\mathrm{init},1}$, action $ {u}$ leads deterministically to $s_\phi$ (with a self\mbox{-}loop), and action $ {v}$ reaches $s_{\psi,1}$ with probability $0.1$ and returns to $s_{\mathrm{init},1}$ with probability $0.9$. 
From $s_{\psi,1}$, the agent returns deterministically to $s_{\mathrm{init},1}$. 
From $s_{\mathrm{init},2}$, action $ {u}$ leads to $s_\phi$, and action $ {v}$ leads to $s_{\psi,2}$ (with a self\mbox{-}loop). 

We consider a constrained sc\mbox{-}LTL planning problem where both the objective $\phi$ and constraint $\psi$ are reachability properties: $\phi = \lozenge Z_\phi$ and $\psi = \lozenge Z_\psi$ with $Z_\phi = \{s_\phi\}$ and $Z_\psi = \{s_{\psi,1},s_{\psi,2}\}$. 
The constraint threshold is $\tau = 0.9$. 
The resulting problem takes the form:
\begin{align}
\max_{\pi} \; P_{\pi} (\sigma \models \lozenge Z_\phi) \notag \\
\mathrm{s.t.\quad}P_{\pi} (\sigma \models \lozenge Z_\psi ) \geq \tau.
\end{align}

We construct a stochastic, non\mbox{-}Markovian policy $\pi_h$ as follows. 
1) At $s_{\mathrm{init},1}$, the policy selects $ {v}$ until the path visits $s_{\psi,1}$, and chooses $ {u}$ thereafter.
2) At $s_{\mathrm{init},2}$, take action $ {u}$ with probability $0.2$ and $ {v}$ with probability $0.8$. 

Such a policy can satisfy the constraint while achieving the highest possible objective function 
\begin{align}\label{eqn:example}
    P_{\pi_h} (\sigma \models \lozenge Z_\phi) 
    & = 0.5 P_{\pi_h} (\sigma \models \lozenge Z_\phi \vert \sigma[0] = s_{\mathrm{init},1}) \notag \\
    & + 0.5 P_{\pi_h} (\sigma \models \lozenge Z_\phi \vert \sigma[0] = s_{\mathrm{init},2}) \notag \\
    & = 0.5\times 1 + 0.5\times 0.2 = 0.6, \notag \\
    P_{\pi_h} (\sigma \models \lozenge Z_\psi) 
    & = 0.5 P_{\pi_h} (\sigma \models \lozenge Z_\psi \vert \sigma[0] = s_{\mathrm{init},1}) \notag \\
    & + 0.5 P_{\pi_h} (\sigma \models \lozenge Z_\psi \vert \sigma[0] = s_{\mathrm{init},2}) \notag \\
    & = 0.5\times 1 + 0.5\times 0.8 = 0.9. 
\end{align} 

In contrast, starting from $s_{\mathrm{init},1}$, no Markovian policy can guarantee reaching both $Z_\phi$ and $Z_\psi$ with probability one.
This is because a Markovian policy chooses actions at $s_{\mathrm{init},1}$ based only on the current state and the time index, and thus cannot condition on whether $s_{\psi,1}$ has already been visited.
Therefore, whenever $\pi$ assigns positive probability to taking $ {u}$ at $s_{\mathrm{init},1}$ at some time, there remains a nonzero probability that $s_{\psi,1}$ has not been reached yet, and taking $ {u}$ leads to the absorbing state $s_\phi$, after which $Z_\psi$ becomes unreachable.

Similarly, any deterministic policy at $s_{\mathrm{init},2}$ must choose either $ {u}$ or $ {v}$ exclusively. To satisfy the constraint, $ {v}$ must be selected, which contributes nothing to the objective.

\end{example}

\begin{figure}[h]
    \centering
    \begin{tikzpicture}[>=stealth, node distance=2.5cm and 3cm, on grid, auto]

        \node[state] (S1) {$s_{\mathrm{init},1}$};
        \node[state, right=of S1] (Phi) {$s_\phi$};
        \node[state, below=of S1] (Psi1) {$s_{\psi,1}$};
        \node[state, right=of Phi] (S2) {$s_{\mathrm{init},2}$};
        \node[state, below=of S2] (Psi2) {$s_{\psi,2}$};

        \path[->]
            (S1) edge[bend left=15] node[above left] {$ {u}$} (Phi)
            (S1) edge[bend right=15] node[below left] {$ {v}$ (prob. 0.1)} (Psi1)
            (S1) edge[loop above, looseness=5] node[above] {$ {v}$ (prob. 0.9)} (S1)
            (Phi) edge[loop above] node {} (Phi)
            (Psi1) edge[bend right=15] node[below right] {} (S1);

        \path[->]
            (S2) edge node[above] {$ {u}$} (Phi)
            (S2) edge node[right] {$ {v}$} (Psi2)
            (Psi2) edge[loop below] node {} (Psi2);

    \end{tikzpicture}

    \caption{
    An MDP with five states and two initial states: $s_{\mathrm{init},1}$ and $s_{\mathrm{init},2}$, each selected with probability 0.5.
    From $s_{\mathrm{init},1}$, action $ {u}$ leads deterministically to $s_\phi$ (with a self\mbox{-}loop), while action $ {v}$ reaches $s_{\psi,1}$ with probability 0.1 and loops back to $s_{\mathrm{init},1}$ with probability 0.9.
    From $s_{\psi,1}$, the process returns deterministically to $s_{\mathrm{init},1}$.
    A separate branch begins at $s_{\mathrm{init},2}$, with $ {u}$ leading to $s_\phi$ and $ {v}$ leading to self\mbox{-}looping $s_{\psi,2}$.
    We consider the sc\mbox{-}LTL objective $\max_\pi P_\pi(\sigma \models \lozenge \{s_\phi\})$, subject to the constraint $P_\pi(\sigma \models \lozenge \{ s_{\psi,1}, s_{\psi,2} \}) \ge 0.9$.
    As discussed in Equation~\eqref{eqn:example}, any Markovian or deterministic policy performs worse than the non\mbox{-}Markovian, stochastic policy we construct for this example.}
    \label{fig:aligned-mdp}
\end{figure}
\begin{remark}
Extending our sc\mbox{-}LTL treatment to general LTL is possible but requires replacing the reachability reduction with an $\omega$\mbox{-}regular acceptance condition on the product MDP (e.g., B\"uchi/Rabin/parity), since LTL satisfaction is defined over infinite paths and depends on long-run recurrent behavior.
Accordingly, the analysis must explicitly reason about recurrent classes/accepting end components rather than reachability, and the simple switching-after-satisfaction interpretation used for sc\mbox{-}LTL does not carry over directly~\cite{wagner2025reinforcement}.
\end{remark}
\section{Eliminating Non\mbox{-}Markovianity via Double Product MDP Construction}\label{sec:4}
This section enables the application of solving techniques designed for Markovian policies to the constrained sc\mbox{-}LTL problem by eliminating the planning problem's non\mbox{-}Markovianity. 
Specifically, we show that by transforming the original problem into an equivalent constrained reachability problem on an extended model of the LMDP, a stationary policy becomes sufficient for optimality.
This construction enables us to leverage existing methods, such as linear programming, which are applicable to stationary policies, to solve the transformed problem and subsequently recover 
the optimal policy for the original sc\mbox{-}LTL planning.

The main challenge lies in the fact that, in the original LMDP, even when both the objective and the constraint are expressed as reachability, the optimal policy is not necessarily Markovian, as illustrated earlier in Fig.~\ref{fig:aligned-mdp}.
However, we show that a similar constrained reachability formulation admits a stationary policy to be the optimal policy on a specially constructed product MDP.

\subsection{Double Product MDP Construction}
We adopt the construction of an extended model for the unconstrained LTL planning problem \cite{baier2008principles} to our constrained problem. 
Given the constrained sc\mbox{-}LTL problem with specifications $\phi$ and $\psi$, we construct a double product MDP that encodes the satisfaction status of both specifications. 
This construction transforms both $\phi$ and $\psi$ into two reachability objectives $\lozenge Z_\phi$ and $\lozenge Z_\psi$. 
The double product MDP is defined as follows:
\begin{definition} \label{def:product mdp}
A double product MDP $\mathcal{M}^{\times} = ( S^{\times}, \allowbreak A^{\times} , T^{\times},\beta^{\times}, Z_\phi^\times, Z_\psi^\times)$ of 
an LMDP $\mathcal{M} = (S, A, T, \beta, \allowbreak \Lambda, L)$ and 
a DFA ${\mathcal{A}_\phi}= (Q_\phi,\Sigma_\phi,\delta_\phi,q_{\mathrm{init},\phi},Z_\phi)$ and 
a DFA ${\mathcal{A}_\psi}= (Q_\psi,\Sigma_\psi,\delta_\psi,q_{\mathrm{init},\psi},Z_\psi)$ is defined by 
\begin{itemize}
    \item the set of states $S^\times := S \times Q_\phi \times Q_\psi$, 
    \item the set of actions ${A}^\times := A$ with admissible action set $A^\times(\langle s,q_\phi,q_\psi\rangle)=A(s)$,
    \item one target set $Z_\phi^\times :=\{\langle s,q_\phi, q_\psi \rangle \in S^{\times} |q_\phi\in Z_\phi\}$, 
    \item another target set $Z_\psi^\times :=\{\langle s,q_\phi, q_\psi \rangle \in S^{\times} |q_\psi\in Z_\psi\}$, 
    \item the initial distribution $\beta^\times(\langle s,q_{\mathrm{init},\phi},q_{\mathrm{init},\psi}\rangle):=\beta(s)$ for all $s\in S$, and $\beta^\times(s^\times)=0$ otherwise,
    \item the transition probability function 
\begin{align}
    &T^\times(s^\times ,a, s^{\times\prime}) = \notag \\
    &\begin{cases}
    T(s,a,s') &  \text{if } s^\times = \langle s,q_\phi, q_\psi\rangle , \text{and } \\ 
              &  s^{\times\prime}=\langle s',\delta_\phi(q_\phi,L(s)), \delta_\psi(q_\psi,L(s))\rangle, \\ 
    0 &\text{otherwise}.\notag
\end{cases} 
\end{align} 

\end{itemize}
\end{definition}
This construction captures the transitions of the LMDP, and two parallel DFAs are updated by the labeling on the LMDP. 
As a result, every LMDP path $\sigma$ corresponds uniquely to a double product MDP path $\sigma^\times$. 

\subsection{Sufficiency of Stationary Policy}\label{sec:constrained reachability}
The construction of the double product MDP enables us to formulate the sc\mbox{-}LTL control problem into the following constrained reachability problem,
\begin{align}\label{eq:CRP}
&\max_{\pi^\times} \; P_{\pi^\times}(\sigma^\times \models \lozenge Z_\phi^\times) \notag
\\
\mathrm{s.t.}  \quad &P_{\pi^\times} (\sigma^\times \models  \lozenge Z_\psi^\times) \ge \tau, 
\end{align}

Although reachability in the original MDP may require non\mbox{-}Markovian behavior, we show, perhaps counterintuitively, that a stationary policy is sufficient to achieve optimality in the double product MDP:
\begin{theorem}\label{thm:main}
    A stationary policy is sufficient for optimality in the constrained reachability problem on the double product MDP. (The proof is given in Section~\ref{proof for main}.)
\end{theorem}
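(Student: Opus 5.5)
The plan is to show a stronger statement. For every (possibly history-dependent, randomized) policy $\pi^\times$ on $\mathcal{M}^\times$, I will construct a stationary policy $\bar\pi$ with
$P_{\bar\pi}(\sigma^\times\models\lozenge Z_\phi^\times)=P_{\pi^\times}(\sigma^\times\models\lozenge Z_\phi^\times)$ and $P_{\bar\pi}(\sigma^\times\models\lozenge Z_\psi^\times)=P_{\pi^\times}(\sigma^\times\models\lozenge Z_\psi^\times)$.
Then both the objective value and feasibility in \eqref{eq:CRP} are preserved, and the theorem follows once an optimal policy exists. Existence will come out of the same argument, since the set of achievable probability pairs is the image of a compact polytope.

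\textbf{Closedness of the targets.} The key structural fact I will use is that $Z_\phi^\times$ and $Z_\psi^\times$ are closed under $T^\times$. Accepting DFA states are absorbing, so once $q_\phi\in Z_\phi$ it stays there. Hence $\lozenge Z_\phi^\times$ is equivalent to $\lozenge\square Z_\phi^\times$, and likewise for $\psi$. This is exactly what fails in the original LMDP of Example~\ref{ex:1}, and it is why memory becomes unnecessary.

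The state space then splits into four closed-or-not regions:
\[
N=S^\times\setminus(Z_\phi^\times\cup Z_\psi^\times),\quad Z_\phi^\times\setminus Z_\psi^\times,\quad Z_\psi^\times\setminus Z_\phi^\times,\quad Z_\phi^\times\cap Z_\psi^\times.
\]
The last three are closed, and the last one is closed and fully satisfying. I first collapse $Z_\phi^\times\cap Z_\psi^\times$ into an absorbing sink $\top$. Whatever happens after entering it is irrelevant to both probabilities.

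\textbf{Removing end components.} Next I compute the maximal end components of the remaining model that do not contain $\top$. For every state in such an end component, I add a fresh ``stop'' action that moves deterministically to a sink. There are three stop sinks, labelled by the region in which the stop occurred: $\bot_N$, $\bot_\phi$, $\bot_\psi$.

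I then argue that this modification preserves the set of achievable probability pairs.
\begin{itemize}
\item Forward direction: under any policy, almost every path either reaches $\top$ or eventually stays forever inside a single end component. By closedness, that end component lies inside one region. So the probability pair equals the pair of absorption masses into the region-labelled sinks under the policy that stops upon eventual confinement. That stopping rule is history-dependent, but this is allowed in the modified model.
\item Converse direction: a stop action in an end component $E$ can be simulated in the original model by a stationary policy that stays in $E$ forever (uniformly over the actions of $E$). The stationary randomization in the modified model is then recovered by the usual splitting of the stop probability.
\end{itemize}

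\textbf{Occupation measures in the transient model.} In the modified model no end component remains outside the sinks, so every policy reaches a sink almost surely with finite expected occupation of each transient state-action pair. For an arbitrary policy I define the expected visit counts $x(s,a)=\sum_t P(s_t=s,a_t=a)$. These are finite and satisfy the flow equations
\[
\sum_a x(s',a)=\beta^\times(s')+\sum_{s,a}x(s,a)T^\times(s,a,s').
\]
I then set $\bar\pi(a\mid s)=x(s,a)/\sum_{a'}x(s,a')$, choosing arbitrarily when the denominator is zero.

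Under this stationary policy the expected visit counts solve the same linear system. That system has a unique solution, because the stationary policy induces a transient Markov chain, so the matrix $I-P_{\bar\pi}$ restricted to the transient states is invertible. Hence the absorption mass into each sink, $\sum_{s,a}x(s,a)T^\times(s,a,\mathrm{sink})$, is identical under both policies. The two reachability probabilities are sums of these masses:
\[
P(\lozenge Z_\phi^\times)=\text{mass}(\top)+\text{mass}(\bot_\phi)+\text{mass of entries into the }\phi\text{-only region},
\]
and symmetrically for $\psi$. A cleaner way to count the last term is to collapse $Z_\phi^\times\setminus Z_\psi^\times$ except for the part still needing to reach $\psi$, and count via $\top$ and the region sinks. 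Either way, both probabilities are matched. Finally, I map $\bar\pi$ back to the original double product MDP by replacing each stop with the stationary ``stay in $E$'' policy. The result is still stationary.

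\textbf{Main obstacle.} I expect the hardest step to be the end-component preprocessing. It has to be shown that stopping can be simulated stationarily in both directions while respecting which region the end component lies in. This relies on the closedness argument: no end component straddles $N$ and a target region in a way that changes satisfaction. Once that is in place, the occupation-measure step is the standard transient constrained-MDP argument.
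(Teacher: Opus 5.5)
Your stronger claim is false, and the step that breaks is the end-component step you flagged as the main obstacle: a stationary policy cannot in general reproduce the probability pair \emph{exactly}. Here is a counterexample. Take a product state $s\notin Z_\phi^\times\cup Z_\psi^\times$ with $\beta^\times(s)=1$, a self-loop action $a$, and an action $b$ after which $Z_\phi^\times$ is reached surely, with $Z_\psi^\times$ unreachable (e.g.\ $\phi=\lozenge p$ with $p$ true only at an absorbing state entered via $b$, and $\psi$ unsatisfiable). The history-dependent policy ``at time $0$ play $a$ or $b$ with probability $1/2$ each, and after $a$ keep playing $a$'' gives the pair $(1/2,0)$. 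A stationary policy plays $b$ at $s$ with a fixed probability $p$, so it gives $(1,0)$ if $p>0$ and $(0,0)$ if $p=0$. In your modified model, the stationary policy ``stop or $b$ with probability $1/2$ each'' does give $(1/2,0)$. But once you replace the stop by ``stay in $E$'', the resulting stationary policy still plays $b$ with positive probability at every return to $s$, so it leaves $E$ almost surely. Committing to stay forever with probability $p_s$ requires memory, and no splitting of the stop probability avoids this.

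Two other steps also fail as stated:
\begin{itemize}
\item Adding a stop action leaves the original actions in place, so $(E,A_E)$ is still an end component. The modified model is therefore not transient, and the claim that every policy reaches a sink almost surely with finite occupation is false.
\item ``Stop upon eventual confinement'' depends on the future of the path, so it is not a policy.
\end{itemize}
In addition, your displayed formula for $P(\lozenge Z_\phi^\times)$ double counts paths that enter the $\phi$-only region and later stop or reach $\top$. The correct quantity is the flow entering $Z_\phi^\times$, which is still linear in the occupation measure.

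The missing idea is that the theorem needs only \emph{domination}, not equality, and the paper proves exactly that. Paths that linger forever outside $Z:=Z_\phi^\times\cup Z_\psi^\times$ contribute nothing to either probability, so they can be redirected toward $Z$ (Lemma~\ref{lem:pre_switch_phprime}). After the first entrance into $Z$, the behavior can be replaced by the stationary maximizer of the remaining single reachability probability. This turns the post-switch contributions into the fixed coefficients $P_{\max}(j\models\lozenge Z_\phi)$ and $P_{\max}(j\models\lozenge Z_\psi)$ (Lemma~\ref{lem: switch2}). Neither modification can decrease either probability. Once the pre-switch occupancy on $S^{?}$ is finite, your normalization $x(s,a)/\sum_{a'}x(s,a')$ reproduces it exactly, by the same flow-equation uniqueness argument (Lemma~\ref{thm: oc}). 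The result is a switching policy that is stationary on $\mathcal M^\times$ and whose pair dominates the original one, which is all optimality requires. Your occupation-measure machinery is the right tool, but apply it only to the pre-switch part, after forcing finite occupancy, rather than trying to preserve ``staying forever'' as an achievable outcome.
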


This result is enabled by the structure of the double product MDP. 
Each product state $ s^\times = \langle s, q_\phi, q_\psi \rangle $ encodes not only the current LMDP state $ s $, but also the satisfaction status of both sc\mbox{-}LTL specifications via the DFA states $ q_\phi $ and $ q_\psi $. 
The following example shows how the DFA states modify Example~\ref{ex:1}. 

\begin{example}
This example provides a minimal illustration of how extending the original MDP enables a stationary policy to represent the non\mbox{-}Markovian policy used in Example~\ref{ex:1}.
As shown in Fig.~\ref{fig:aligned-mdp}, after visiting $ s_{\psi,1} $, the path returns to the state $ s_{\mathrm{init},1} $ in the original MDP.
Fig.~\ref{fig:product-mdp} illustrates the corresponding transition in the extended model, where after the first visit to $s_{\psi,1}$ the subsequent transition is directed to the distinguishable copy $s_{\mathrm{init},1}'$ rather than $s_{\mathrm{init},1}$.
Here, $s_{\mathrm{init},1}'$ is a distinguishable copy of $s_{\mathrm{init},1}$ that encodes the history information that $s_{\psi,1}$ has already been visited.
Therefore, the non\mbox{-}Markovian policy used in Example~\ref{ex:1} (``at $s_{\mathrm{init},1}$, choose $ {v}$ until the first visit to $s_{\psi,1}$ and choose $ {u}$ thereafter'') can be implemented by a stationary policy on the extended model that chooses $ {v}$ at $s_{\mathrm{init},1}$ and $ {u}$ at $s_{\mathrm{init},1}'$.

For clarity, Fig.~\ref{fig:product-mdp} is only an illustration and does not represent the double product MDP in Definition~\ref{def:product mdp}.
In particular, we omit the explicit automaton states and only show how the augmented state space distinguishes the two cases at $s_{\mathrm{init},1}$ by introducing the distinguishable copy $s_{\mathrm{init},1}'$. 
\end{example}

\begin{figure}[h]
    \centering
    \begin{tikzpicture}[>=stealth, node distance=2.5cm and 3cm, on grid, auto]
        \node[state] (S1) {$s_{\mathrm{init},1}$};
        \node[state, right=of S1] (Phi) {$s_\phi$};
        \node[state, below=of S1] (Psi1) {$s_{\psi,1}$};

        \node[state, below=of Phi, xshift=1.0cm] (S1p) {$s_{\mathrm{init},1}'$};

        \node[state, right=of Phi] (S2) {$s_{\mathrm{init},2}$};
        \node[state, below=of S2] (Psi2) {$s_{\psi,2}$};

        \path[->]
            (S1) edge[bend left=15] node[above left] {$ {u}$} (Phi)
            (S1) edge[bend right=15] node[below left] {$ {v}$ (prob. 0.1)} (Psi1)
            (S1) edge[loop above, looseness=5] node[above] {$ {v}$ (prob. 0.9)} (S1);

        \path[->]
            (Phi) edge[loop above, looseness=5] node {} (Phi);

        \path[->]
            (Psi1) edge[bend left=14] node[pos=0.55, above] {} (S1p)
            (S1p) edge[bend left=10] node[pos=0.55, below] {$ {v}$ (prob. 0.1)} (Psi1);

        \path[->]
            (S1p) edge[bend left=10] node[above right] {$ {u}$} (Phi)
            (S1p) edge[loop below, looseness=5] node[below] {$ {v}$ (prob. 0.9)} (S1p);

        \path[->]
            (S2) edge node[above] {$ {u}$} (Phi)
            (S2) edge node[right] {$ {v}$} (Psi2)
            (Psi2) edge[loop below] node {} (Psi2);

    \end{tikzpicture}
    \caption{
    An extended model of the MDP in Example~\ref{ex:1}.
    Compared with Fig.~\ref{fig:aligned-mdp}, the extended model introduces a new distinguishable state $s_{\mathrm{init},1}'$, which encodes the relevant history information that $s_{\psi,1}$ has been visited.
    As a result, the non\mbox{-}Markovian policy in Example~\ref{ex:1} (choosing $ {v}$ at $s_{\mathrm{init},1}$ until the first visit to $s_{\psi,1}$ and choosing $ {u}$ thereafter) can be implemented by a stationary policy on the extended model that chooses $ {v}$ at $s_{\mathrm{init},1}$ and $ {u}$ at $s_{\mathrm{init},1}'$.
    }
    \label{fig:product-mdp}
\end{figure}

\subsection{Recovering the Optimal Policy on the Original LMDP}
Furthermore, we show that solving the constrained reachability problem on the double product MDP yields a solution to the original constrained sc\mbox{-}LTL planning problem.
\begin{lemma}\label{lem:recover}
    The optimal non\mbox{-}Markovian policy for the constrained sc\mbox{-}LTL problem on the MDP can be recovered from the optimal stationary policy for the constrained reachability problem on the double product MDP.  
\end{lemma}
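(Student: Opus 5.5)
Our plan is to establish a probability-preserving correspondence between policies on $\mathcal{M}$ and policies on $\mathcal{M}^\times$. With such a correspondence, the two optimization problems have the same feasible set of value pairs $\bigl(P(\sigma\models\phi),P(\sigma\models\psi)\bigr)$. The optimal stationary policy from Theorem~\ref{thm:main} can then be pulled back to an optimal (generally non\mbox{-}Markovian) policy on $\mathcal{M}$.

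\textbf{Step 1: path correspondence.} Since $\delta_\phi$ and $\delta_\psi$ are deterministic and the initial DFA states are fixed by $\beta^\times$, every finite prefix $h_t=(s_0,a_0,\ldots,s_t)$ of $\mathcal{M}$ determines a unique product prefix $h_t^\times=(\langle s_0,q^0_\phi,q^0_\psi\rangle,a_0,\ldots,\langle s_t,q^t_\phi,q^t_\psi\rangle)$, and projecting recovers $h_t$. Here $q^{k+1}_\phi=\delta_\phi(q^k_\phi,L(s_k))$, and similarly for $\psi$. If a DFA transition is undefined (partial $\delta$), we will add a rejecting sink state, which is standard and changes no acceptance. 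This is a bijection between positive-probability prefixes that preserves actions.

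\textbf{Step 2: policy transfer.} Given a stationary $\pi^\times$ on $\mathcal{M}^\times$, we define $\pi(h_t,t):=\pi^\times(s^\times_t)$, where $s^\times_t$ is the last state of $h^\times_t$. This policy is computable online by running both DFAs on the observed labels, so it is a finite-memory, non\mbox{-}Markovian policy on $\mathcal{M}$. Conversely, any policy $\pi$ on $\mathcal{M}$ induces $\pi^\times(h^\times_t,t):=\pi(\mathrm{proj}(h^\times_t),t)$. We will show by induction on $t$, using the recursive formula for $P_\pi(h_{t+1})$ and the definition of $T^\times$, that $P_\pi(h_t)=P_{\pi^\times}(h^\times_t)$ for all prefixes in both directions. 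By the standard extension argument, the laws of the infinite paths then correspond.

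\textbf{Step 3: event correspondence.} We will show that $\sigma\models\phi$ iff $\sigma^\times\models\lozenge Z^\times_\phi$, and likewise for $\psi$. By the DFA characterization in Section~\ref{subsec:sc-LTL}, $\sigma\models\phi$ iff some prefix drives $\mathcal{A}_\phi$ into $Z_\phi$. The $\phi$-component of $\sigma^\times$ is exactly the DFA run on $L(\sigma)$, so this happens iff $\sigma^\times$ visits $Z^\times_\phi$.

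The main obstacle is the one-step index offset: $T^\times$ updates the DFA with $L(s)$ of the \emph{current} state, so the DFA component at time $t+1$ reflects $L(s_0)\cdots L(s_t)$. The run therefore lags by one step, and we must check that acceptance still coincides. Because accepting states are absorbing, the lag only delays the visit to $Z^\times$ by one step and never removes it, so reachability is unaffected.

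\textbf{Conclusion.} By Steps 2 and 3, the transferred policies achieve identical objective and constraint probabilities. Feasibility and values are therefore preserved in both directions, and the optimal values of \eqref{eq:objective}--\eqref{eq:constraint} and \eqref{eq:CRP} coincide. Let $\pi^{\times*}$ be the optimal stationary policy from Theorem~\ref{thm:main}. Its pullback $\pi^*$ is feasible for the original problem and attains this common optimal value. Any policy on $\mathcal{M}$ with a strictly better feasible value would, through its induced $\pi^\times$, contradict the optimality of $\pi^{\times*}$, so $\pi^*$ is optimal.
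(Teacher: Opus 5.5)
Your proposal is correct and follows the same route as the paper: you pull back the stationary product policy by running both DFAs on the observed history, match prefix probabilities and acceptance events, and conclude optimality by contradiction. The explicit lift of an arbitrary policy on $\mathcal{M}$ to $\mathcal{M}^\times$, which the paper's contradiction step uses only implicitly, is a welcome refinement rather than a different argument, as are the sink-state completion of the partial DFAs and your check of the one-step index lag.
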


\begin{proof}
We explicitly construct a policy on the original LMDP from a stationary policy on the double product MDP.
Let $\pi^\times$ be a stationary policy on $\mathcal{M}^\times$.
We define a policy $\pi$ on $\mathcal{M}$ as follows.
Given any finite state\mbox{-}action prefix $h_t=(s_0,a_0,\ldots,s_t)$ on $\mathcal{M}$, we deterministically update the DFA states
$q_{\phi,t}$ and $q_{\psi,t}$ along $h_t$ using the same update convention as in Definition~\ref{def:product mdp},
and set
\begin{align}\label{eq:reconstruct_pi}
\pi(\cdot\mid h_t)
:=
\pi^\times(\cdot \mid \langle s_t,q_{\phi,t},q_{\psi,t}\rangle).
\end{align}

By Definition~\ref{def:product mdp}, each LMDP path $\sigma=(s_0,s_1,\ldots)$ corresponds uniquely to a product path
$\sigma^\times=(\langle s_0,q_{\phi,0},q_{\psi,0}\rangle,\langle s_1,q_{\phi,1},q_{\psi,1}\rangle,\ldots)$.
Moreover, under the reconstruction~\eqref{eq:reconstruct_pi}, $\pi$ and $\pi^\times$ induce the same distribution over corresponding finite prefixes,
since they use the same transition probability function $T$ and the DFA updates are deterministic.
Finally, $\sigma\models \phi$ (resp. $\sigma\models \psi$) holds iff $\sigma^\times$ reaches $Z_\phi^\times$ (resp. $Z_\psi^\times$).
Therefore,
\begin{align}\label{eq:prob_equiv}
P_{\pi}(\sigma \models \phi) &= P_{\pi^\times}(\sigma^\times \models \lozenge Z_\phi^\times) \notag\\
P_{\pi}(\sigma \models \psi) &= P_{\pi^\times}(\sigma^\times \models \lozenge Z_\psi^\times).  
\end{align}

To conclude, let $\pi^\times_{\mathrm{opt}}$ be an optimal stationary solution to the constrained reachability problem on $\mathcal{M}^\times$,
and let $\pi_{\mathrm{opt}}$ be the policy on $\mathcal{M}$ reconstructed from $\pi^\times_{\mathrm{opt}}$ via~\eqref{eq:reconstruct_pi}.
By~\eqref{eq:prob_equiv}, $\pi_{\mathrm{opt}}$ attains the same objective and constraint satisfaction probabilities as $\pi^\times_{\mathrm{opt}}$.

If $\pi_{\mathrm{opt}}$ were not optimal on $\mathcal{M}$, then there would exist a feasible policy $\tilde{\pi}$ on $\mathcal{M}$ with
$P_{\tilde{\pi}}(\sigma\models \psi)\ge \tau$ and $P_{\tilde{\pi}}(\sigma\models \phi) > P_{\pi_{\mathrm{opt}}}(\sigma\models \phi)$.
By~\eqref{eq:prob_equiv}, this would imply the existence of a feasible policy on $\mathcal{M}^\times$
that achieves a strictly larger reachability probability to $Z_\phi^\times$ than $\pi^\times_{\mathrm{opt}}$,
contradicting the optimality of $\pi^\times_{\mathrm{opt}}$.
Therefore, $\pi_{\mathrm{opt}}$ is an optimal policy for the original constrained sc\mbox{-}LTL problem on $\mathcal{M}$.
\end{proof}

With this result, we can apply LP\mbox{-}based techniques or solve the Lagrangian Bellman equation in the double product MDP, and map the resulting stationary policy back to a non\mbox{-}Markovian policy in the original MDP.

For simplicity, we henceforth drop the superscript $\times$ when referring to the double product MDP. 

\section{Optimality of Stationary Policies}\label{sec:switching}

In this section, we establish that a stationary policy suffices to be the solution to the constrained reachability problem on the double product MDP.
At the end of this section, we formally prove Theorem~\ref{thm:main}. 

Although Example~\ref{ex:1} shows that stationary policies are not sufficient to achieve optimality for the constrained reachability problem on a finite\mbox{-}state MDP, we identify a class of non\mbox{-}Markovian policies that can. This class, denoted $\pi_{s \to s}$, switches between different stationary sub\mbox{-}policies depending on which target set (if any) has been reached, thereby resolving the temporal dependency between the two reachability objectives. The switching condition is naturally encoded in the DFA states of the double product MDP, rendering $\pi_{s \to s}$ stationary in that space.

\begin{remark}
Classical constrained MDP theory does not cover our case. 
In the constrained MDP literature (e.g., \cite{altman2021constrained}), similar results typically rely on the fact that the occupancy measure of any arbitrary policy can be matched by that of a stationary policy. 
Since both the objective and constraint are linear in the occupancy measure, this matching implies optimality of stationary policies. 
However, this matching in occupancy measure typically requires either (i) a discount factor or (ii) structural assumptions on the MDP, such as being an absorbing MDP and having a single target set.
Neither assumption holds in our case: we consider undiscounted reachability with two interacting target sets, where behavior after reaching one set still influences the other.
\end{remark}

\subsection{A Switching Policy $\pi_{s\to s}$ for the LMDP}
As illustrated in Fig.~\ref{fig:aligned-mdp}, the optimal policy to the constrained reachability problem may still be non\mbox{-}Markovian due to inherent temporal dependency between the two reachability objectives. 

We propose a class of policies 
\zt{
that consist of a \emph{pre\mbox{-}switch} policy and two \emph{post\mbox{-}switch} policies, where the switch is triggered by the first entrance into $Z_\phi\cup Z_\psi$, 
}
\begin{itemize}
    \item $\pi_{\mathrm{mix}}$ is a stationary policy applied before any states in $Z_\phi$ or $Z_\psi$ are visited. We also refer to it as the pre\mbox{-}switch policy.
    \item $\pi_{\psi}$ is a (post\mbox{-}switch) stationary policy applied once a state in $Z_\phi$ is visited, it then encourages reaching $Z_\psi$,  
    \item $\pi_{\phi}$ is a (post\mbox{-}switch) stationary policy applied once a state in $Z_\psi$ is visited, it then encourages reaching $Z_\phi$, 
\end{itemize}
On the double product MDP, if the initial state is not in $Z_\phi$ or $Z_\psi$, the path begins under $\pi_{\mathrm{mix}}$ and switches to either $\pi_\phi$ or $\pi_\psi$ depending on which target is visited first, as shown in Fig.~\ref{fig:switch-policy}. 
We denote this switching policy by $\pi_{s \to s}$.
It is not necessary to specify the policy after both target sets have been visited, since at that point both $\phi$ and $\psi$ can be seen as satisfied.

\begin{figure}[h]
    \centering
    \begin{tikzpicture}[
        >=Stealth,
        node distance=3cm and 2cm,
        on grid,
        auto,
        rounded corners=4pt,
        policy/.style={rectangle, draw, rounded corners=3pt,
                       minimum width=20mm, minimum height=8mm, align=center}
    ]
        \node[policy] (Mix) {$\pi_{\mathrm{mix}}$};
        \node[policy, right=of Mix, yshift=12mm] (Psi) {$\pi_{\psi}$};
        \node[policy, right=of Mix, yshift=-12mm] (Phi) {$\pi_{\phi}$};

        \draw[->, thick] (Mix.north) |- node[pos=0.3, above left] {$Z_\phi$} (Psi.west);
        \draw[->, thick] (Mix.south) |- node[pos=0.3, below left] {$Z_\psi$} (Phi.west);
    \end{tikzpicture}
    \caption{Illustration of the switching policy $\pi_{s\to s}$. 
    The pre\mbox{-}switch policy $\pi_{\mathrm{mix}}$ is followed until a target set is reached.
    \zt{
    Entering $Z_\phi$ triggers a switch to the post\mbox{-}switch policy $\pi_\psi$, while entering $Z_\psi$
    triggers a switch to the post\mbox{-}switch policy $\pi_\phi$.
    }}
    \label{fig:switch-policy}
\end{figure}

Importantly, this switching policy is realizable as a stationary policy on the double product MDP. 
Because $Z_\phi$ and $Z_\psi$ represent the double product MDP states whose DFA components encode which (if any) target set has been visited. 
Specifically, under an arbitrary policy $\pi_h$, once $Z_\phi$ (resp. $Z_\psi$) is entered, the path will never visit states outside of $Z_\phi$ (resp. $Z_\psi$). 
For all states $\langle s,q_\phi, q_\psi \rangle$, the switching policy $\pi_{s \to s}$ is defined as
\begin{align}
&\pi_{s \to s}(\langle s,q_\phi, q_\psi \rangle) = \notag \\
&
\begin{cases}
\pi_{\mathrm{mix}}(\langle s,q_\phi, q_\psi \rangle), & \text{if } q_\phi \notin Z_\phi \text{ and } q_\psi \notin Z_\psi \\
\pi_\psi(\langle s,q_\phi, q_\psi \rangle),         & \text{if } q_\phi \in Z_\phi \text{ and } q_\psi \notin Z_\psi \\
\pi_\phi(\langle s,q_\phi, q_\psi \rangle),         & \text{if } q_\phi \notin Z_\phi \text{ and } q_\psi \in Z_\psi. \\
\end{cases}
\end{align}
The switch from $\pi_{\mathrm{mix}}$ to either $\pi_\phi$ or $\pi_\psi$ will only happen once. 
And there will be no switch from either $\pi_\phi$ or $\pi_\psi$ back to $\pi_{\mathrm{mix}}$. 
The reason is that once a path has visited $Z_\phi\cup Z_\psi$, it will never enter $S\setminus(Z_\phi\cup Z_\psi)$ again.

To formally show that, without loss of optimality, one may restrict attention to the proposed switching policies for the constrained reachability problem, we introduce the notion of a dominating class of policies.
\begin{definition}
A class of policies $\overline{U}$ is said to be a dominating class of policies for the constrained planning problem for a given initial distribution $\beta$, 
if for any policy $\pi$ there exists a policy $\overline{\pi} \in \overline{U}$ such that
\begin{align}
    &P_{\overline{\pi}}(\sigma \models \lozenge Z_\phi ) \ge P_{\pi} (\sigma \models \lozenge Z_\phi ) \notag \\
    &P_{\overline{\pi}}(\sigma \models \lozenge Z_\psi ) \ge P_{\pi} (\sigma \models \lozenge Z_\psi ).
\end{align}
\end{definition}
        
\begin{theorem}\label{thm: switch 1}
    The class of our proposed switching policy $\pi_{s\to s}$ is a dominating class of policies for the constrained reachability problem on the LMDP. 
\end{theorem}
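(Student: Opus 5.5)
Given an arbitrary (possibly history-dependent, randomized) policy $\pi$ on the double product MDP, we build a switching policy $\pi_{s\to s}$ whose reachability probabilities to $Z_\phi$ and $Z_\psi$ are both at least those of $\pi$. The construction handles the pre-switch phase and the post-switch phase separately.

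\textbf{Post-switch policies.} Let $\pi_\psi$ be a stationary deterministic policy maximizing $P(s\models\lozenge Z_\psi)$ from every state. Such a policy exists by classical unconstrained reachability theory on finite MDPs. Similarly, let $\pi_\phi$ maximize $P(s\models\lozenge Z_\phi)$ from every state. Consider a path that first enters $Z_\phi\cup Z_\psi$ at time $\mathsf T$ at state $s$.
\begin{itemize}
\item If $s\in Z_\phi\setminus Z_\psi$: $\phi$ is already satisfied, and since $Z_\phi$ is absorbing it stays satisfied. The conditional probability of later reaching $Z_\psi$ under $\pi$ (continued with its history) is at most the optimal value $V_\psi^*(s)$, which $\pi_\psi$ attains.
\item If $s\in Z_\psi\setminus Z_\phi$: the symmetric argument applies with $\pi_\phi$.
\item If $s\in Z_\phi\cap Z_\psi$: both specifications hold.
\end{itemize}
Hence, conditioned on the first-entrance state and the history, replacing the continuation of $\pi$ by the appropriate post-switch policy weakly increases both probabilities. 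The right formalization is the strong Markov property at the stopping time $\mathsf T$: the continuation of a history-dependent policy is itself a policy from $s$, so its value is at most the optimal value.

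\textbf{Pre-switch policy.} It remains to replace $\pi$ before $\mathsf T$ by a stationary $\pi_{\mathrm{mix}}$ that induces the same distribution of the first-entrance state $s_{\mathsf T}$ (including the event $\mathsf T=\infty$). Make the states of $Z_\phi\cup Z_\psi$ absorbing in an auxiliary MDP $\widehat{\mathcal M}$. Then $P_\pi(s_{\mathsf T}=z)$ is the probability of being absorbed at $z$. Define occupancy measures on transient states $x(s,a)=\sum_t P_\pi(s_t=s,a_t=a,t<\mathsf T)$, and set $\pi_{\mathrm{mix}}(a\mid s)=x(s,a)/\sum_{a'}x(s,a')$ wherever the denominator is positive.

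\textbf{Main obstacle.} The hard step is showing that this stationary policy reproduces the absorption distribution, because $x$ may be infinite. This happens when $\pi$ stays forever with positive probability in a non-target end component, which is the undiscounted, non-absorbing issue noted in the Remark. To handle it:
\begin{enumerate}
\item Prune states whose $x$ is infinite and which belong to end components disjoint from the targets. Mass trapped there never reaches either target, so $\pi$ gains nothing from it.
\item Redirect that mass to any action, or to the stay-forever behavior, which yields the same zero contribution to the entry distribution.
\end{enumerate}
For the finite part, use the standard flow-conservation argument. The equations $\sum_a x(s',a)=\beta(s')+\sum_{s,a}x(s,a)T(s,a,s')$ are satisfied by both policies. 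Restricted to states that leave with positive probability, the flow equations have a unique solution under the stationary policy, so $\pi_{\mathrm{mix}}$ has the same occupancy there. The absorption probabilities $\sum_{s,a}x(s,a)T(s,a,z)$ therefore coincide. A cleaner alternative I would try first is to add, at non-target states, a fictitious ``stop'' sink capturing $\mathsf T=\infty$ mass. This makes $\widehat{\mathcal M}$ absorbing, so Altman's theorem for absorbing MDPs gives a stationary policy with identical exit distribution over $Z_\phi\cup Z_\psi\cup\{\text{stop}\}$. One must then argue that the stop mass can be realized by staying inside end components, or dropped without decreasing either objective.

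\textbf{Conclusion.} Combining the two phases, $\pi_{s\to s}$ built from $(\pi_{\mathrm{mix}},\pi_\phi,\pi_\psi)$ satisfies $P_{\pi_{s\to s}}(\lozenge Z_\phi)=\sum_z P(s_{\mathsf T}=z)V_\phi(z)\ge P_\pi(\lozenge Z_\phi)$, and likewise for $\psi$. This establishes the dominating property.
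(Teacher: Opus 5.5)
Your architecture matches the paper's. The strong-Markov replacement of the post-switch continuation by stationary maximizers is the content of Lemma~\ref{lem: switch2}, and normalizing the pre-switch occupancy and then using uniqueness of the flow equations on transient states is Lemma~\ref{thm: oc}.

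The gap is in how you dispose of infinite occupancy. You assume that $x(s)=\infty$ arises only when $\pi$ is trapped with positive probability in a non-target end component, and that mass there never reaches a target. Neither holds for history-dependent policies. Let $s\in S^{?}$ have a self-loop $a$ and an action $b$ into $Z_\phi$, and let $\pi$ play $b$ at time $t$ with probability $1/(t+2)$. Then $P_\pi(\mathsf T>t)=1/(t+1)$, so $Z_\phi$ is reached almost surely. Yet $x(s)=\sum_t 1/(t+1)=\infty$, and $s$ lies in the non-target end component $(\{s\},\{a\})$.
\begin{itemize}
\item Pruning $s$ destroys all of $\pi$'s value.
\item Redirecting to ``stay forever'' leaves $x(s)=\infty$, so the normalization $x(s,a)/x(s)$ and the uniqueness step never apply.
\item Even with genuine trapping, pruning \emph{states} discards mass that passes through them and later exits. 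For example, loop five times, then play $b$ with probability $1/2$ and loop forever otherwise. Only trapped \emph{paths} are worthless, and in general no finite history certifies that a path is trapped.
\end{itemize}
Your stop-sink alternative fails the same way. Adding a stop action does not make $\widehat{\mathcal M}$ absorbing in Altman's sense, because a policy may still linger in a non-target end component with infinite expected sojourn.

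What is missing is a reduction to finite occupancy that loses no value. The post-switch values $P_{\max}(z\models\lozenge Z_\phi)$ and $P_{\max}(z\models\lozenge Z_\psi)$ are nonnegative. So it suffices to find a stationary $\pi_{\mathrm{mix}}$ whose first-entrance distribution on $Z$ dominates that of $\pi$ componentwise. One route:
\begin{enumerate}
\item Let $\pi^N$ follow $\pi$ for $N$ steps and then, if still in $S^{?}$, switch to a stationary maximizer of $P(\lozenge Z)$.
\item That maximizer leaves $S^{?}$ with positive probability from every state, hence with geometric tails, so $\pi^N$ has finite occupancy.
\item The entry probabilities of $\pi^N$ are at least those of $\pi$ minus $P_\pi(N<\mathsf T<\infty)$, which tends to $0$.
\item The exit-flow vectors of nonnegative solutions of the flow equations form a polytope, since total exit flow equals $\beta(S^{?})$. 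A limit point of the vectors from step 3 is therefore attained by some solution, and it dominates $\pi$'s entry distribution.
\item Take a mass-minimal solution with that exit vector. This strips circulations, so the normalized policy has no closed class in its support, and your uniqueness argument shows the normalized stationary policy realizes it.
\end{enumerate}

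The paper's Lemma~\ref{lem:pre_switch_phprime} makes the same leap from ``no positive-probability trapping'' to $\mu_{\pi_h'}<\infty$. Following the paper therefore does not close this gap. Its redirection of fully trapped mass toward $Z$, unlike your pruning, at least never loses value.
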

\begin{proof}
See Section~\ref{sec:thm2}.
\end{proof}

To prove Theorem~\ref{thm: switch 1}, we start from an arbitrary policy $\pi_h$ and
transform it into a switching policy $\pi_{s\to s}$ without decreasing either
the objective or the constraint satisfaction probability.
The transformation consists of two steps.

First, in Section~\ref{sec:step_1}, we modify $\pi_h$ only after the first entrance into $Z_\phi\cup Z_\psi$.
This yields a switching policy (denoted $\pi_{h\to s}$ in Section~\ref{sec:step_1}) whose post\mbox{-}switch component is stationary and
does not decrease either reachability probability.

Second, in Section~\ref{sec:thm2}, we show that the \emph{pre\mbox{-}switch} component of $\pi_{h\to s}$
can be replaced by a stationary policy $\pi_{\mathrm{mix}}$ without decreasing either reachability probability.
Combining the two steps yields a switching policy of the form $\pi_{s\to s}$, establishing the dominance claim.

\subsection{
Stationary Replacement of the Post\mbox{-}Switching Component without Loss of Optimality}\label{sec:step_1}

We first show that, after the first entrance into $Z_\phi\cup Z_\psi$, switching to a stationary policy that maximizes the reachability probability of the remaining target set
only improves, and never worsens, the objective and constraint reachability probabilities.

\zt{Given an arbitrary policy $\pi_h$, we construct a switching policy $\pi_{h\to s}$ as follows:}
\begin{itemize}
    \item Use $\pi_h$ until visiting any state in $Z_\phi$ or $Z_\psi$.
    \item Switch to $\pi_{\psi}\in\arg\max_{\pi} P_{\pi}(\sigma \models \lozenge Z_\psi)$ after entering $Z_\phi$.
    \item Switch to $\pi_{\phi}\in\arg\max_{\pi} P_{\pi}(\sigma \models \lozenge Z_\phi)$ after entering $Z_\psi$.
\end{itemize}

\begin{lemma}\label{lem: switch2}
    The class of our proposed switching policy $\pi_{h\to s}$ is a dominating class of policies for the constrained reachability problem.
\end{lemma}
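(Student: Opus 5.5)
Given an arbitrary policy $\pi_h$, the approach is to compare $\pi_h$ and the switching policy $\pi_{h\to s}$ built from it path by path, decomposing according to what happens at the first entrance time into $Z_\phi\cup Z_\psi$. Let $\kappa:=\inf\{t:\sigma[t]\in Z_\phi\cup Z_\psi\}$ be this first entrance time. Since $\pi_{h\to s}$ coincides with $\pi_h$ on every history up to and including time $\kappa$, the two closed-loop processes induce the same distribution over the stopped prefixes $h_\kappa$, and the same probability of the event $\{\kappa=\infty\}$. On $\{\kappa=\infty\}$ neither target is reached under either policy, so this event contributes zero to both reachability probabilities. It therefore suffices to compare conditional reachability probabilities given each finite prefix $h_\kappa$. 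There are three cases, according to whether $\sigma[\kappa]$ lies in $Z_\phi\setminus Z_\psi$, in $Z_\psi\setminus Z_\phi$, or in $Z_\phi\cap Z_\psi$.

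In the case $\sigma[\kappa]\in Z_\phi\cap Z_\psi$, both events already hold, so the conditional probabilities equal $1$ under both policies. In the case $\sigma[\kappa]\in Z_\phi\setminus Z_\psi$, the objective $\lozenge Z_\phi$ is already satisfied under both policies. For the constraint, I would use the strong Markov property of the MDP at the stopping time $\kappa$. Conditioned on $h_\kappa$, the continuation of $\pi_h$ defines some (history-dependent) policy $\pi_h^{h_\kappa}$ started from the state $s=\sigma[\kappa]$, namely $\pi_h^{h_\kappa}(h')=\pi_h(h_\kappa\circ h')$. Hence
\begin{align}
P_{\pi_h}(\sigma\models\lozenge Z_\psi\mid h_\kappa)=P_{\pi_h^{h_\kappa}}(s\models\lozenge Z_\psi)\le \max_{\pi}P_{\pi}(s\models \lozenge Z_\psi)=P_{\pi_\psi}(s\models\lozenge Z_\psi).\notag
\end{align}
The final equality needs the classical fact that for a single reachability objective on a finite MDP there is a stationary deterministic policy $\pi_\psi$ that is optimal \emph{simultaneously from every initial state}, and that the supremum over history-dependent policies equals the optimum over stationary ones (see, e.g., \cite{baier2008principles}). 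The case $\sigma[\kappa]\in Z_\psi\setminus Z_\phi$ is symmetric, using $\pi_\phi$.

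Integrating these conditional inequalities over the common distribution of $h_\kappa$ gives
\[
P_{\pi_{h\to s}}(\sigma\models\lozenge Z_\phi)\ge P_{\pi_h}(\sigma\models\lozenge Z_\phi)
\quad\text{and}\quad
P_{\pi_{h\to s}}(\sigma\models\lozenge Z_\psi)\ge P_{\pi_h}(\sigma\models\lozenge Z_\psi),
\]
which is exactly the dominance claimed in the lemma.

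The main obstacle is making the conditioning rigorous. One must check that the continuation of $\pi_h$ after $h_\kappa$ is itself an admissible policy, and that $\pi_{h\to s}$ restarted at $\kappa$ really behaves as the stationary $\pi_\psi$ started from $\sigma[\kappa]$. The latter relies on the absorbing property noted in the paper: once the path enters $Z_\phi$ it never leaves, so the switch is never undone. A further subtlety is that $\arg\max$ in the definition of $\pi_{h\to s}$ must be interpreted as a single policy that is uniformly optimal from all states, rather than optimal only for the initial distribution $\beta$. I would state this explicitly and justify it by the standard Bellman optimality characterization of maximal reachability probabilities.
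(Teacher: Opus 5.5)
Your proof is correct and is essentially the paper's argument: both decompose at the first entrance into $Z_\phi\cup Z_\psi$ and bound the post-entrance continuation value by $P_{\max}$, which the stationary $\pi_\phi,\pi_\psi$ attain. The only difference is that the paper averages your conditioning on $h_\kappa$ into the terms $\mu_{\pi_h}(i,a)\,T(i,a,j)\,\eta_{\pi_h}(j)$. Your version also makes explicit two points the paper leaves tacit: the continuation of $\pi_h$ after $h_\kappa$ is itself a policy, which is what justifies $\eta_{\pi_h}(j)\le P_{\max}(j\models\lozenge Z_\phi)$; and $\pi_\phi,\pi_\psi$ must be optimal from every state, not merely an $\arg\max$ for the initial distribution $\beta$.
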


The proof proceeds by decoupling the interaction between the two reachability specifications.
We show that the objective function can be decomposed into two components:  
(1) the occupancy measure over paths that have not yet reached either $Z_\phi$ or $Z_\psi$, and  
(2) the conditional reachability probability starting from states within $Z_\psi$.
By replacing the second term with the maximum unconstrained reachability probability to $Z_\phi$, we strictly increase (or preserve) the objective function, without affecting the constraint function.
An analogous argument applies when analyzing the constraint function with respect to $Z_\psi$.
The key quantity in this decomposition is the \emph{occupancy measure}, which describes how often each state\mbox{-}action pair is visited before hitting either target set.  

\begin{definition}
Let $\pi$ be a policy and let $Z \subseteq S$ be a target set.
In this paper, we set $Z := Z_\phi \cup Z_\psi$ and define
\begin{align}
S^{0} &\coloneqq \left\{ s \in S \;\middle\vert\; P_{\max}(s \models \lozenge Z)=0 \right\} , \\
S^{?} &\coloneqq S \setminus \left(Z \cup S^{0}\right),
\end{align}
where $P_{\max}(s\models\lozenge Z):=\max_{\pi}P_\pi(s\models\lozenge Z)$ and $P_{\pi}(s \models \lozenge Z):=P_{\pi}(\sigma \models \lozenge Z \mid s_0=s)$. 

(1) \zt{For all $j\in S^?$ and $a \in A(j)$, the probability that the closed\mbox{-}loop process under $\pi$ stays within $S^{?}$ for $t$ steps and visits the state\mbox{-}action pair $(j,a)$ at time $t$, conditioned on $s_0=i$ is given by}
\begin{align}
& \xi_\pi(i,j,a,t,S^{?})
\coloneqq \notag \\
& P_{\pi}\Big(
s_t=j,\ a_t=a,\ s_n\in S^{?}\ \forall n<t
\ \Big|\ s_0=i
\Big). 
\end{align} 

(2) \zt{For all $j\in S^?$ and $a \in A(j)$, the \emph{occupancy measure} of a state\mbox{-}action pair $(j,a)$ on $S^{?}$ is defined as the expected number of times $(j,a)$ is visited:}
\begin{align}
\mu_\pi(j,a)
\coloneqq
\sum_{i\in S^{?}} \sum_{t=0}^\infty \beta(i)\, \xi_\pi(i,j,a,t,S^{?}).
\notag
\end{align}

(3) For all $j\in S^?$, the \emph{state occupancy measure} on $S^{?}$ is then:
\begin{align}
\mu_\pi(j) := \sum_{a \in A(j)} \mu_\pi(j,a).
\end{align}
\end{definition}
\noindent
The restriction to $S^{?}$ is without loss of generality.
Any path that enters $S^{0}$ contributes $0$ to both reachability events, thus it suffices to define the pre\mbox{-}switch occupancy measure on $S^{?}$.

\hspace*{1em}\emph{Proof of Lemma~\ref{lem: switch2}: }
\zt{Let $Z:=Z_\phi\cup Z_\psi$ and recall the sets $S^{0}$ and $S^{?}$ defined above.}
\zt{Since $P_{\max}(s \models \lozenge Z)=0$ for all $s\in S^{0}$, any path that enters $S^{0}$ cannot reach $Z$ under any policy.}
\zt{Therefore, states in $S^{0}$ contribute $0$ to both $P_{\pi}(\sigma\models\lozenge Z_\phi)$ and $P_{\pi}(\sigma\models\lozenge Z_\psi)$ and can be ignored below.}

Given an arbitrary policy $\pi_h$, the reachability probability to the target set $Z_\phi$
can be decomposed as
\begin{align}
& P_{\pi_h}(\sigma \models \lozenge Z_\phi) =
\sum_{i\in Z_\phi}\beta(i)
+\sum_{i\in Z_\psi\setminus Z_\phi}\beta(i)\,P_{\pi_h}(i \models \lozenge Z_\phi) \notag\\
&+\sum_{i\in S^{?}}\beta(i)\,P_{\pi_h}(i \models \lozenge Z_\phi) \notag\\
&=
\sum_{i\in Z_\phi}\beta(i)
+\sum_{i\in Z_\psi\setminus Z_\phi}\beta(i)\,P_{\pi_h}(i \models \lozenge Z_\phi) \notag\\
&+
\sum_{i\in S^{?}}\sum_{a\in A(i)}
\Bigg(
\sum_{j\in Z_\phi}\mu_{\pi_h}(i,a)\,T(i,a,j)  \notag\\
&+\sum_{j\in Z_\psi\setminus Z_\phi}\mu_{\pi_h}(i,a)\,T(i,a,j)\,\eta_{\pi_h}(j)
\Bigg),
\label{eq:decomp_obj}
\end{align}
where for all $j\in Z_\psi\setminus Z_\phi$,
\[
\eta_{\pi_h}(j)
:=
P_{\pi_h}\!\Big(\sigma \models \lozenge Z_\phi \ \Big|\ 
\text{$j$ is the first state visited in $Z$}\Big).
\]

The first equality partitions the event $\sigma\models\lozenge Z_\phi$ according to the initial state.
The second equality further conditions on the first entrance into $Z$: starting from $i\in S^{?}$, the path either enters $Z_\phi$ directly, or enters $Z_\psi\setminus Z_\phi$ first at some $j$ and then reaches $Z_\phi$ with probability $\eta_{\pi_h}(j)$.

For any policy $\pi_h$, there exists a switching policy $\pi_{h \to s}$, such that 
\begin{align}
    P_{\pi_{h \to s}}(\sigma \models \lozenge Z_\phi)  \ge P_{\pi_h}(\sigma \models \lozenge Z_\phi). 
\end{align}
This inequality holds by the construction of $\mu_{\pi_{h \to s}}(s,a)$, 
\begin{align}\label{eqn:ineq}
     &{ P_{\pi_{h \to s}}(\sigma \models \lozenge Z_\phi) } \notag \\
      &= \sum_{i\in Z_\phi} \beta(i) 
       + \sum_{i\in Z_\psi\setminus Z_\phi} \beta(i)\, P_{\max}(i \models \lozenge Z_\phi) \notag \\
      &+ \sum_{i\in S^?} \sum_{a \in A(i)} \sum_{j\in Z_\phi} \mu_{\pi_h}(i, a)\, T(i,a,j) \notag \\
     &+ \sum_{i\in S^?} \sum_{a \in A(i)} \sum_{j\in Z_\psi\setminus Z_\phi} \mu_{\pi_h}(i, a)\, T(i,a,j)\, P_{\max}(j \models \lozenge Z_\phi) \notag \\
     &\ge { P_{\pi_{h}}(\sigma \models \lozenge Z_\phi) }.
\end{align}
Here $P_{\max}(j \models \lozenge Z_\phi)$ denotes the maximal reachability probability from state $j$ to $Z_\phi$.
{
Since $j\in Z_\psi\setminus Z_\phi$ implies that $\psi$ is already satisfied, no constraint remains.
This maximum is then attained by a stationary policy, a classical result for unconstrained reachability problems on finite\mbox{-}state MDPs~\cite{baier2008principles}.
}

Applying the same argument to $P_{\pi_h}(\sigma \models \lozenge Z_\psi)$ yields
\begin{align}
    P_{\pi_{h \to s}}(\sigma \models \lozenge Z_\psi)  \ge P_{\pi_h}(\sigma \models \lozenge Z_\psi).
\end{align}
The two inequalities for the objective and the constraint hold simultaneously for one specific $\pi_{h\to s}$. 
The reason is $\pi_\psi$ does not affect $P_{\pi_{h \to s}}(\sigma \models \lozenge Z_\phi)$ and $\pi_\phi$ does not affect $P_{\pi_{h \to s}}(\sigma \models \lozenge Z_\psi)$. 

By definition, the class of switching policies $\pi_{h\to s}$ is a dominating class of policies for the constrained reachability problem.
\hfill $\blacksquare$

\subsection{
Stationary Replacement of the Pre\mbox{-}Switching Component without Loss of Optimality
}\label{sec:thm2}
In this subsection, we show that the pre\mbox{-}switching (possibly non\mbox{-}Markovian) component in $\pi_{h\to s}$
can be replaced by a stationary policy $\pi_{\mathrm{mix}}$ without loss of optimality.

The argument proceeds in three steps.
\begin{itemize}
    \item \zt{Starting from $\pi_h$, we construct another (possibly non\mbox{-}Markovian) policy $\pi_h'$ such that it does not lose optimality in terms of either reachability probability.}
    \item Using the pre\mbox{-}switch occupancy measure induced by $\pi_h'$ on $S^{?}$, we define a stationary policy $\pi_{\mathrm{mix}}$ on $S^{?}$ via normalization.
    \item We then show that replacing the pre\mbox{-}switch component of $\pi_{h'\to s}$ by $\pi_{\mathrm{mix}}$ does not lose optimality.
\end{itemize}

\begin{lemma}\label{lem:pre_switch_phprime}
Given an arbitrary (possibly non\mbox{-}Markovian) pre\mbox{-}switch policy $\pi_h$ on $S^{?}$, there exists a (possibly non\mbox{-}Markovian) policy $\pi_h'$ such that 
\begin{enumerate}
    \item $\mu_{\pi_h'}(x)<\infty$ for all $x\in S^{?}$, 
    \item the associated switching policies satisfy
    \begin{align}\label{eq:dominate_ph_to_phprime}
    &P_{\pi_{h'\to s}}(\sigma\models\lozenge Z_\phi)\ \ge\ P_{\pi_{h\to s}}(\sigma\models\lozenge Z_\phi)  \\
    &P_{\pi_{h'\to s}}(\sigma\models\lozenge Z_\psi)\ \ge\ P_{\pi_{h\to s}}(\sigma\models\lozenge Z_\psi).
    \end{align}
\end{enumerate}
\end{lemma}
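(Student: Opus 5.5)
Starting from $\pi_h$, I will build $\pi_h'$ by changing $\pi_h$ only on histories that would otherwise remain in $S^{?}$ forever, so that the expected pre\mbox{-}switch time becomes finite.

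\textbf{Step 1 (the event that creates infinite occupancy).} Consider the closed\mbox{-}loop process under $\pi_h$ and the event $E_\infty$ that the path stays in $S^{?}$ forever, i.e.\ never enters $Z\cup S^{0}$. Paths in $E_\infty$ contribute nothing to either $P(\sigma\models\lozenge Z_\phi)$ or $P(\sigma\models\lozenge Z_\psi)$. Infinite occupancy can only be produced by these lingering paths, since the complementary paths leave $S^{?}$ in finite time.

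\textbf{Step 2 (construction of $\pi_h'$).} Fix a stationary deterministic policy $\pi_Z$ attaining $P_{\max}(s\models\lozenge Z)$ for every $s$; this exists by the classical unconstrained reachability result~\cite{baier2008principles}. For $s\in S^{?}$ we have $P_{\max}(s\models\lozenge Z)>0$. Under $\pi_Z$, therefore, from every state of $S^{?}$ the path leaves $S^{?}$ within $|S|$ steps with probability at least some $p>0$, so the expected time spent in $S^{?}$ under $\pi_Z$ is finite and uniformly bounded.

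To use this, I need a test on finite histories that tells whether we are lingering. The limit $P_{\pi_h}(\text{exit } S^{?} \text{ eventually}\mid h_t)$ tends to $\mathbf 1_{E_\infty^c}$ almost surely by L\'evy's 0\mbox{-}1 law. Hence the following rule is well defined: fix $\epsilon>0$, and at the first time $t$ the conditional exit probability drops below $\epsilon$, switch from $\pi_h$ to $\pi_Z$. Call the resulting policy $\pi_h'$.

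\textbf{Step 3 (checking the two inequalities).} The switch occurs on a set of histories whose continuation under $\pi_h$ reaches $Z$ with probability below $\epsilon$. After the switch, $\pi_Z$ reaches $Z$ with probability $P_{\max}\ge$ the corresponding probability under $\pi_h$. Moreover, once $Z$ is reached, the post\mbox{-}switch $\pi_\phi,\pi_\psi$ of Lemma~\ref{lem: switch2} are applied.

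By the decomposition~\eqref{eq:decomp_obj}, what matters is where $Z$ is first entered. The delicate point is that $\pi_Z$ may enter $Z_\phi$ where $\pi_h$ would have entered $Z_\psi$, which could lower one of the two probabilities. To avoid this I would strengthen the switching rule: switch only on histories from which the continuation of $\pi_h$ reaches $Z$ with probability exactly zero. This is still a legitimate rule, because the quantity is determined by $h_t$.

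The difficulty is that exact zero may never occur while lingering still has positive probability; think of a path that decays geometrically toward a loop inside $S^{?}$. In that case the fallback is a limiting argument. The set of achievable pairs $(P(\lozenge Z_\phi),P(\lozenge Z_\psi))$ under $\pi_{h\to s}$, taken over policies with finite occupancy, is arbitrarily close to that of $\pi_h$. I would then use compactness of the achievable region of finite\mbox{-}horizon\mbox{-}truncated policies together with the finite\mbox{-}state structure to obtain exact domination. Concretely, the tool is the classical fact for absorbing sub\mbox{-}MDPs that the set of achievable vectors of first\mbox{-}entrance distributions into $Z\cup S^0$ is closed and is realized by transient policies after end components are collapsed. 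With that, I would replace $\pi_h$ by a transient $\pi_h'$ that has the same or larger entrance distribution on $Z_\phi$ and on $Z_\psi\setminus Z_\phi$. Finite occupancy, item 1, then follows from transience: the expected sojourn in $S^{?}$ is bounded by $|S|/p$.

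\textbf{Main obstacle.} The hardest step is showing that the first\mbox{-}entrance distribution is preserved exactly rather than only up to $\epsilon$. I expect this to require the end\mbox{-}component argument: inside a maximal end component contained in $S^{?}$, lingering mass is redirected along exits, in proportions chosen so that each entrance probability weakly increases.
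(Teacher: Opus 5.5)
Your proposal does not yet prove the lemma. The step that carries the lemma's content is exact, not $\epsilon$-approximate, domination by a policy with finite occupancy. You defer it to a ``classical fact'' that is essentially the lemma restated, and neither construction you actually specify delivers it.

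Step~1 is also false for history-dependent policies: infinite occupancy does not require paths that stay in $S^{?}$ forever. Take $S^{?}=\{s\}$ with a self-loop $a$ and an action $b$ into $Z_\phi$. Let $\pi_h$ play $b$ at time $t$ with probability $1/(t+2)$. Then $P_{\pi_h}(s_t=s)=1/(t+1)$, so the path leaves $S^{?}$ almost surely, yet $\mu_{\pi_h}(s)=\sum_{t\ge0}1/(t+1)=\infty$. The conditional exit probability is always $1$, so your $\epsilon$-rule never fires and returns $\pi_h'=\pi_h$, which violates item~1. If instead $b$ is played with probability $2^{-(t+1)}$, the path stays at $s$ forever with probability $\prod_{k\ge1}(1-2^{-k})>0$. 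Yet from every history the continuation still reaches $Z$ with positive probability, which is exactly your objection to the exact-zero rule.

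The paper's proof is a special case of that exact-zero rule: it switches to a maximal-reachability policy upon entering a set $C\subseteq S^{?}$ that $\pi_h$ almost surely never leaves. It then infers ``no lingering, hence finite occupancy.'' The $2^{-(t+1)}$ example defeats the first step and the $1/(t+2)$ example the second, so following the paper would not close your gap.

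The missing idea is to make lingering an explicit, removable action.
\begin{itemize}
\item In the MDP with $Z\cup S^{0}$ made absorbing, collapse each maximal end component $(E,A_E)$ with $E\subseteq S^{?}$ into a single state $\hat s_E$. Its actions are the pairs $(s,a)$ with $s\in E$, $a\notin A_E(s)$, plus an action $\bot$ leading to a dead sink.
\item Project $\pi_h$ onto this quotient. At each visit to $\hat s_E$, choose $(s,a)$ with the conditional probability, given the quotient history, that $(s,a)$ is the next non-$A_E$ action $\pi_h$ takes. Choose $\bot$ with the conditional probability that it never takes one. Almost surely the state--action pairs used infinitely often form an end component, so this reproduces exactly the law of the first entrance into $Z\cup S^{0}$, with all lingering mass routed to $\bot$.
\item At every history, move the probability of $\bot$ onto any other action. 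Every finite quotient path that first enters $Z$ at a state $j$ avoids $\bot$, and its probability is a product of policy and transition probabilities, so it can only increase. Hence each first-entrance probability into $Z$ weakly increases, and so do both objectives, which are nonnegative combinations of these probabilities (cf.~\eqref{eqn:ineq}). No special choice of ``proportions'' is needed.
\item Without $\bot$, the quotient has no end component inside the image of $S^{?}$. So under every policy and from every history it exits within $|S|$ steps with probability at least some $p>0$.
\item Lift back to $\mathcal M$ by steering inside each $E$ to the chosen $s$ with a stationary policy over $A_E$, which has finite expected hitting time. This yields item~1 together with item~2.
\end{itemize}
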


\begin{proof}
Let $Z:=Z_\phi\cup Z_\psi$. Consider any subset $C\subseteq S^{?}$ such that (i) $C\cap Z=\emptyset$ and
(ii) under $\pi_h$, once the path enters $C$ it stays in $C$ forever with probability $1$.
For such a $C$, paths that enter $C$ never reach $Z$ and hence contribute $0$ to both reachability events.

Since $C\subseteq S^{?}=S\setminus(Z\cup S^0)$, we have $P_{\max}(x\models\lozenge Z)>0$ for all $x\in C$.
Define $\pi_h'$ to coincide with $\pi_h$ until the first entrance into $C$. Once the path is in $C$ and $Z$ has not been reached,
$\pi_h'$ will maximize the probability to visit a state in $Z$.
Therefore, under $\pi_h'$ the probability of staying in $C$ forever is $0$.
Applying this modification to every such $C$ eliminates the possibility of staying in $S^{?}$ forever with positive probability,
hence $\mu_{\pi_h'}(x)<\infty$ for all $x\in S^{?}$.

Finally, $\pi_h'$ differs from $\pi_h$ only on paths that would be trapped in some $C$ and thus have zero contribution to both reachability events under $\pi_{h\to s}$.
The modification only creates additional chances to reach $Z$, and the post\mbox{-}switch components are identical by construction,
therefore~\eqref{eq:dominate_ph_to_phprime} holds.
\end{proof}

Now we construct a stationary policy $\pi_\mathrm{mix}$ on $S^{?}$ based on $\pi_h'$.
Since $\pi_\mathrm{mix}$ is stationary, it induces a Markov chain on $S$.
For all $x\in S^{?}$ with $\mu_{\pi_h'}(x)>0$, define
\begin{align}\label{eqn: oc}
\pi_\mathrm{mix}(x,a)=\frac{\mu_{\pi_h'}(x,a)}{\mu_{\pi_h'}(x)} .
\end{align}
If $\mu_{\pi_h'}(x)=0$, then $x$ is never visited before entering $Z$ under $\pi_h'$
(hence $\mu_{\pi_h'}(x,a)=0$ for all $a$), so we may choose $\pi_\mathrm{mix}(x,\cdot)$ arbitrarily.
Moreover, by Lemma~\ref{lem:pre_switch_phprime}, $\mu_{\pi_h'}(x)<\infty$ for all $x\in S^{?}$,
and thus the case $\mu_{\pi_h'}(x)=\infty$
never occurs in~\eqref{eqn: oc} on $S^{?}$.

\begin{lemma}
\label{thm: oc}
Under the construction of $\pi_\mathrm{mix}$ in~\eqref{eqn: oc}, for all $x\in S^{?}$ and $a\in A(x)$, $\mu_{\pi_\mathrm{mix}}(x,a) = \mu_{\pi_h'}(x,a)$. 
(The proof is given in the Appendix.)
\end{lemma}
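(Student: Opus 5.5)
The plan is the standard flow-balance argument for transient occupancy measures, adapted to the pre-switch region $S^{?}$ (Altman-style). Write $\nu:=\mu_{\pi_h'}$ and $\nu':=\mu_{\pi_\mathrm{mix}}$, both viewed as vectors indexed by $S^{?}$. Let $Q$ be the substochastic matrix on $S^{?}$ with
\[
Q(x,y)=\sum_{a}\pi_\mathrm{mix}(x,a)\,T(x,a,y), \qquad x,y\in S^{?}.
\]
The proof goes in three steps.

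\textbf{Step 1: $\nu$ satisfies the balance equations of $\pi_\mathrm{mix}$.} Because $\xi$ only counts paths still in $S^{?}$, summing over $t\ge1$ and conditioning on $(s_{t-1},a_{t-1})$ gives, for every $y\in S^{?}$,
\[
\nu(y)=\beta(y)+\sum_{x\in S^{?}}\sum_{a}\nu(x,a)\,T(x,a,y).
\]
This holds for any (possibly non-Markovian) policy. It uses only that the transition kernel is Markov given the current state-action pair and that $\beta$ is supported in the relevant sum. Every term is finite by Lemma~\ref{lem:pre_switch_phprime}. Substituting $\nu(x,a)=\pi_\mathrm{mix}(x,a)\,\nu(x)$ from \eqref{eqn: oc} (trivially true when $\nu(x)=0$) yields $\nu=\beta+\nu Q$. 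Here $\beta$ is restricted to $S^{?}$.

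\textbf{Step 2: the stationary policy's measure solves the same equation, and that solution is unique.} For the stationary policy, $\nu'=\sum_{t\ge0}\beta Q^t$, which is the minimal nonnegative solution of $v=\beta+vQ$. Iterating the equation from Step 1 gives
\[
\nu=\sum_{t=0}^{n-1}\beta Q^t+\nu Q^n \ \ge\ \nu'.
\]
For the reverse inequality I need $\nu Q^n\to0$. This is the main obstacle, since $\nu$ is finite but $Q$ need not be transient on all of $S^{?}$.

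To get around it, restrict attention to $S^+:=\{x\in S^{?}:\nu(x)>0\}$. States outside $S^+$ receive no mass under either measure: the balance equation shows that mass flows into $y$ only from $S^+$, and if $\nu(y)=0$ then every inflow term is zero. So the arbitrary choices of $\pi_\mathrm{mix}$ off $S^+$ are irrelevant, and $Q$ restricted to $S^+$ is what matters.

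Next, show that $Q|_{S^+}$ has no closed recurrent class. Suppose $R\subseteq S^+$ were closed under $Q$ with $\beta$-mass or inflow. Then summing the balance equation over $R$ gives
\[
\sum_{y\in R}\nu(y)=\sum_{y\in R}\beta(y)+(\text{inflow from }S^+\setminus R)+\sum_{y\in R}\nu(y),
\]
which forces the $\beta$-mass and inflow into $R$ to vanish. That contradicts $\nu>0$ on $R$. Hence every state of $S^+$ is transient under $Q$, the spectral radius of $Q|_{S^+}$ is less than $1$, and $Q^n\to0$ there. Consequently $\nu Q^n\to0$, and $\nu=\nu'$ as state occupancies.

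\textbf{Step 3: pass to state-action pairs.} We have $\nu'(x,a)=\pi_\mathrm{mix}(x,a)\,\nu'(x)=\pi_\mathrm{mix}(x,a)\,\nu(x)=\nu(x,a)$ by \eqref{eqn: oc}. This gives $\mu_{\pi_\mathrm{mix}}(x,a)=\mu_{\pi_h'}(x,a)$ for all $x\in S^{?}$ and $a\in A(x)$.

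I expect the transience argument in Step 2 to need the most care. In particular, the case $\nu(x)=0$ with an arbitrary choice of $\pi_\mathrm{mix}(x,\cdot)$ must be shown harmless, because such states are never reached from the support of $\beta$ through $S^+$.
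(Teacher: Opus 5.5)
Your proposal is correct and follows essentially the same route as the paper's appendix proof: you rewrite the flow equation for $\mu_{\pi_h'}$ using $\pi_{\mathrm{mix}}$, restrict to the positive-occupancy set $S^{?}_{>0}$, derive transience there from the finiteness of $\mu_{\pi_h'}$, get uniqueness from invertibility of $I-Q$, and then pass to state--action pairs. Your transience step sums the balance equation over a closed $R$, cancels the finite $\sum_{y\in R}\nu(y)$, and then uses the path-level definition of $\nu$ to note that a set with no initial mass and no inflow is never visited; this is tighter than the paper's version, which asserts that a $\pi_{\mathrm{mix}}$-recurrent class would force $\mu_{\pi_h'}=\infty$ without linking trapping under $\pi_{\mathrm{mix}}$ to occupancy under $\pi_h'$.
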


Therefore, the pre\mbox{-}switch occupancy measure on $S^{?}$ induced by $\pi_h'$ can be replicated by the stationary policy $\pi_\mathrm{mix}$.

\begin{lemma}\label{lem:switch2 to switch1}
Given an arbitrary policy $\pi_h$ and the switching policy $\pi_{h\to s}$ constructed from $\pi_h$ in Section~\ref{sec:step_1},
let $\pi_h'$ be the policy obtained in the first step of this subsection and
let $\pi_{h'\to s}$ be the switching policy constructed from $\pi_h'$ in the same way as $\pi_{h\to s}$.
There exists a switching policy $\pi_{s \to s}$ such that
\begin{align}
&P_{\pi_{s \to s}}(\sigma \models \lozenge Z_\phi)
\ \ge\
P_{\pi_{h' \to s}}(\sigma \models \lozenge Z_\phi)  \notag\\
&P_{\pi_{s \to s}}(\sigma \models \lozenge Z_\psi)
\ \ge\
P_{\pi_{h' \to s}}(\sigma \models \lozenge Z_\psi).
\end{align}
\end{lemma}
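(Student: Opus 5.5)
We take $\pi_{s\to s}$ to be the switching policy with pre\mbox{-}switch component $\pi_{\mathrm{mix}}$ from~\eqref{eqn: oc}, and with the same post\mbox{-}switch components $\pi_\psi$, $\pi_\phi$ (maximizers of unconstrained reachability, stationary by~\cite{baier2008principles}) as in $\pi_{h'\to s}$. The idea is that both reachability probabilities of any switching policy whose post\mbox{-}switch parts are fixed depend on the pre\mbox{-}switch part only through the initial distribution and the pre\mbox{-}switch occupancy measure on $S^{?}$. Lemma~\ref{thm: oc} then says this measure is the same for $\pi_{\mathrm{mix}}$ and $\pi_h'$. So we expect the inequalities in the statement to hold in fact with equality.

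\textbf{Decomposition.} We repeat the decomposition~\eqref{eq:decomp_obj} for a switching policy $\pi$ with post\mbox{-}switch part $\pi_\phi$. Partition by the initial state, and for $i\in S^{?}$ condition on the first entrance into $Z=Z_\phi\cup Z_\psi$. This gives
\begin{align*}
P_{\pi}(\sigma\models\lozenge Z_\phi)
&=\sum_{i\in Z_\phi}\beta(i)+\sum_{i\in Z_\psi\setminus Z_\phi}\beta(i)\,P_{\max}(i\models\lozenge Z_\phi)\\
&\quad+\sum_{i\in S^{?}}\sum_{a\in A(i)}\mu_{\pi}(i,a)\Big(\sum_{j\in Z_\phi}T(i,a,j)+\sum_{j\in Z_\psi\setminus Z_\phi}T(i,a,j)\,P_{\max}(j\models\lozenge Z_\phi)\Big).
\end{align*}
The analogous identity holds for $Z_\psi$, with $\pi_\psi$ and $P_{\max}(\cdot\models\lozenge Z_\psi)$. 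To justify the third term, we write the probability of first entering $Z$ at $j$ via the pair $(i,a)$ as $\sum_t$ of the probability of staying in $S^{?}$ up to time $t$, being at $(i,a)$ at time $t$, and then moving to $j$. By definition of $\xi$ and $\mu$, this sum equals $\mu_\pi(i,a)T(i,a,j)$. Paths entering $S^0$ contribute nothing.

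\textbf{Why the post\mbox{-}switch factor is exactly $P_{\max}(j\models\lozenge Z_\phi)$.} After the switch, the policy is the stationary $\pi_\phi$. Its conditional reachability probability from $j$ is therefore history\mbox{-}independent and equals $P_{\max}(j\models\lozenge Z_\phi)$. This uses the fact that $Z_\psi$ is absorbing on the double product MDP, so the path stays under $\pi_\phi$ after the switch.

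\textbf{Conclusion.} Apply the identity to both $\pi_{s\to s}$ and $\pi_{h'\to s}$. The first two terms depend only on $\beta$. The third term depends only on $\mu$ restricted to $S^{?}$, and by Lemma~\ref{thm: oc} this is identical for the two policies. Hence both probabilities coincide, which gives the inequalities. Finiteness of $\mu_{\pi_h'}$, from Lemma~\ref{lem:pre_switch_phprime}, ensures all sums are finite and that the normalization~\eqref{eqn: oc} is well defined.

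\textbf{Main obstacle.} The delicate step is the interchange that rewrites the first\mbox{-}entrance probability as $\sum_t \xi$ for a non\mbox{-}Markovian $\pi_h'$. I would handle it by summing the disjoint events ``first entrance into $Z$ at time $t+1$'' and applying the one\mbox{-}step Markov property of $T$ conditioned on the history. The $\xi$ terms marginalize over histories inside $S^{?}$, and the terms are nonnegative, so Tonelli's theorem permits reordering the sums.
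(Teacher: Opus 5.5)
Your proposal is correct and follows the paper's proof. You build $\pi_{s\to s}$ from $\pi_{\mathrm{mix}}$ in~\eqref{eqn: oc} with the same post\mbox{-}switch components $\pi_\phi,\pi_\psi$, reuse the first\mbox{-}entrance decomposition~\eqref{eqn:ineq}, and invoke Lemma~\ref{thm: oc} to obtain equality of both reachability probabilities with those of $\pi_{h'\to s}$, exactly as the paper does. Your extra justifications make explicit steps the paper leaves implicit: writing first\mbox{-}entrance probabilities as $\sum_t \xi\,T$, and using absorption of $Z_\psi$ together with stationarity of $\pi_\phi$ to get the factor $P_{\max}(j\models\lozenge Z_\phi)$.
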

\begin{proof}
By Lemma~\ref{lem:pre_switch_phprime}, replacing the pre\mbox{-}switch component of $\pi_{h\to s}$ by $\pi_h'$ cannot decrease either reachability probability.

We now construct $\pi_{s\to s}$ by taking the same post\mbox{-}switch policies as $\pi_{h'\to s}$ and
replacing its pre\mbox{-}switch policy $\pi_h'$ on $S^{?}$ with the stationary policy $\pi_{\mathrm{mix}}$ defined by~\eqref{eqn: oc}.

By Lemma~\ref{thm: oc}, $\mu_{\pi_{\mathrm{mix}}}(i,a)=\mu_{\pi_h'}(i,a)$ for all $i\in S^{?}$ and $a\in A(i)$.
Using the same decomposition argument as in~\eqref{eqn:ineq} (with $\pi_h$ replaced by $\pi_h'$), we obtain
\begin{align} &{ P_{\pi_{s \to s}}(\sigma \models \lozenge Z_\phi) } \notag \\ 
&= \sum_{i\in Z_\phi} \beta(i) + \sum_{i\in Z_\psi\setminus Z_\phi} \beta(i) P_{\max}(i \models \lozenge Z_\phi) \notag \\ 
&+ \sum_{i\in S^{?}} \sum_{a \in A(i)} \sum_{j\in Z_\phi} \mu_{\pi_\mathrm{mix}}(i, a) T(i,a,j) \notag \\ 
&+\sum_{i\in S^{?}} \sum_{a \in A(i)} \sum_{j\in Z_\psi\setminus Z_\phi} \mu_{\pi_\mathrm{mix}}(i, a) T(i,a,j) P_{\max}(j \models \lozenge Z_\phi) \notag \\ 
&\zt{= P_{\pi_{h' \to s}}(\sigma \models \lozenge Z_\phi)}\notag \\ 
&\ge P_{\pi_{h \to s}}(\sigma \models \lozenge Z_\phi). 
\end{align}
The same result holds for the constraint function 
\begin{align} 
P_{\pi_{s \to s}}(\sigma \models \lozenge Z_\psi) \ge P_{\pi_{h \to s}}(\sigma \models \lozenge Z_\psi). 
\end{align} 
\end{proof}

Furthermore, we can show that for any $\pi_h$, there always exists a $\pi_{s\to s}$ that is at least as good as $\pi_h$ in performance. 

\hspace*{1em}\emph{Proof of Theorem~\ref{thm: switch 1}: }
    Given an arbitrary non\mbox{-}Markovian policy $\pi_h$, by Lemma~\ref{lem: switch2}, the switching policy $\pi_{h\to s}$ is always at least as good as the policy $\pi_h$. 
    By Lemma~\ref{lem:switch2 to switch1}, the switching policy $\pi_{s\to s}$ is no worse than the policy $\pi_{h\to s}$. 
    Thus, the switching policy $\pi_{s\to s}$ is a dominating class of policies for the constrained reachability problem. 
\hfill $\blacksquare$

\subsection{Optimality of Stationary Policy}\label{proof for main}
Now we complete the proof for our main result, Theorem~\ref{thm:main}. 
\hspace*{1em}\emph{Proof of Theorem~\ref{thm:main}: }
Theorem~\ref{thm:main} contains two claims on the double product MDP $\mathcal M^\times$:
(i) every proposed switching policy $\pi_{s\to s}$ is a stationary policy on $\mathcal M^\times$,
and (ii) restricting the constrained reachability problem~\eqref{eq:CRP} on $\mathcal M^\times$ to stationary policies is without loss of optimality.

Recall that each product state is $s^\times=\langle s,q_\phi,q_\psi\rangle$.
By the definition of $\pi_{s\to s}$, the action selection depends only on whether
$q_\phi\in Z_\phi$ and/or $q_\psi\in Z_\psi$, i.e., only on the current product state $s^\times$,
and does not depend on the past history.
Therefore, any switching policy of the form $\pi_{s\to s}$ is a stationary policy on $\mathcal M^\times$.

For optimality, by Theorem~\ref{thm: switch 1}, the class of switching policies of the form $\pi_{s\to s}$ is dominating for~\eqref{eq:CRP}:
for any feasible policy, there exists a switching policy of the form $\pi_{s\to s}$ that remains feasible and achieves no smaller objective value.
Hence, an optimal solution of~\eqref{eq:CRP} exists among policies of the form $\pi_{s\to s}$.
Since every such policy is stationary on $\mathcal M^\times$ (shown above), an optimal solution can be chosen stationary, proving the claim.
\hfill $\blacksquare$

\section{Linear Programming Formulation for Constrained sc\mbox{-}LTL Planning}\label{sec:LP}

In this section, we present a linear programming formulation for solving the constrained sc\mbox{-}LTL planning problem. 
As established in Section~\ref{sec:constrained reachability}, the original constrained sc\mbox{-}LTL problem can be transformed into an equivalent constrained reachability problem on the double product MDP, for which an optimal stationary policy exists. 
Our LP formulation computes this optimal stationary policy by combining a pre\mbox{-}switching policy $\pi_{\mathrm{mix}}$ with two stationary sub\mbox{-}policies: an optimistic policy $\pi_{\phi}$ that prioritizes the objective, and a pessimistic policy $\pi_{\psi}$ that prioritizes safety. 
The overall procedure, from problem transformation to LP solution and policy composition, is summarized in Algorithm~\ref{alg:scLTL}.
Here, we omit the details of how to obtain $\pi_{\psi}$ and $\pi_{\phi}$ since it is a well-known result \cite{baier2008principles}. 

\subsection{Checking Feasibility and Removing Self\mbox{-}Loops}

This subsection corresponds to Step~7--14 of Algorithm~\ref{alg:scLTL}.
We first check feasibility of the constraint with threshold $\tau$ using the unconstrained reachability computation in Step~8:
the optimal value of Step~8 equals the maximum achievable probability of reaching $Z_\psi^\times$.
If this value is smaller than $\tau$, then the constrained sc\mbox{-}LTL planning problem is infeasible and Algorithm~\ref{alg:scLTL} terminates and returns \texttt{Infeasible}.

Next, we partition the state space to avoid states with infinite occupancy measures caused by self\mbox{-}loops.
We reuse the state\mbox{-}space partition $(Z,S^{0},S^{?})$ introduced in Section~\ref{sec:switching} (see also Step~11--14 of Algorithm~\ref{alg:scLTL}) on the double product MDP:
\begin{align}
Z &\coloneqq Z_\phi^\times \cup Z_\psi^\times, \notag \\
S^{0} &\coloneqq \left\{ s^\times \in S^\times \;\middle\vert \; P_{\max}(s^\times \models \lozenge Z) = 0 \right\},  \notag \\
S^{?} &\coloneqq S^\times \setminus \left(Z \cup S^{0} \right).  \notag
\end{align}
The set $S^{0}$ can be found via a backward reachability graph search. A path starting from these states can never visit states $Z$ under any policy. 
We exclude $\mu(s,a)$ with $s\in S^{0}\cup Z$ from the decision variable $X$. 

\subsection{Representing the Constrained Reachability Problem in a Linear Form}
This subsection corresponds to Step $15$--$21$ of Algorithm~\ref{alg:scLTL}, where we solve the LP for the pre\mbox{-}switching policy $\pi_{\mathrm{mix}}$. 
Solving for $\pi_{\mathrm{mix}}$ requires introducing a decision variable $X$, from which we define two linear functions $R_\phi(X)$ and $C_\psi(X)$ representing the objective and constraint satisfaction probabilities, respectively. 
This allows us to formulate the constrained reachability problem in standard LP form:
\begin{align}
    &\max_X R_\phi(X) \notag \\
    \mathrm{s.t.}  \quad & C_\psi (X) - \tau \ge 0 \notag \\
    &X \ge 0.
\end{align}

For a stationary policy $\pi$ with $\pi(s)\in \Delta(A(s))$ for all $s\in S$, the objective probability can be written as the following linear function of the occupancy measures:
\begin{align} 
&{ P_{\pi_{s \to s}}(\sigma \models \lozenge Z_\phi) } \notag \\ 
&= \sum_{i\in Z_\phi} \beta(i) + \sum_{i\in Z_\psi\setminus Z_\phi} \beta(i) P_{\max}(i \models \lozenge Z_\phi) \notag \\ 
&+ \sum_{i\in S^{?}} \sum_{a \in A(i)} \sum_{j\in Z_\phi} \mu(i, a) T(i,a,j) \notag \\ 
&+\sum_{i\in S^{?}} \sum_{a \in A(i)} \sum_{j\in Z_\psi\setminus Z_\phi} \mu(i, a) T(i,a,j) P_{\max}(j \models \lozenge Z_\phi).
\end{align}
Meanwhile, the constraint function is expressed as
\begin{align} 
&{ P_{\pi_{s \to s}}(\sigma \models \lozenge Z_\psi) } \notag \\ 
&= \sum_{i\in Z_\psi} \beta(i) + \sum_{i\in Z_\phi\setminus Z_\psi} \beta(i) P_{\max}(i \models \lozenge Z_\psi) \notag \\ 
&+ \sum_{i\in S^{?}} \sum_{a \in A(i)} \sum_{j\in Z_\psi} \mu(i, a) T(i,a,j) \notag \\ 
&+\sum_{i\in S^{?}} \sum_{a \in A(i)} \sum_{j\in Z_\phi\setminus Z_\psi} \mu(i, a) T(i,a,j) P_{\max}(j \models \lozenge Z_\psi) \notag \\
&\ge \tau. 
\end{align}
Furthermore, as in~\cite{derman1970finite}, the occupancy measure must satisfy the following flow constraint 
\begin{align}
\beta(j)
+ 
\underbrace{
\sum_{i \in S^?} 
\sum_{a \in A(i)} 
\mu(i, a)T(i,a,j)  
}_{\text{inflow}}
=
\underbrace{\sum_{a \in A(j)} \mu(j, a)}_{\text{outflow}} .
\end{align}

\subsection{Composing the Switching Policy}
Finally, once $\pi_{\mathrm{mix}}$, $\pi_{\psi}$, and $\pi_{\phi}$ are computed, we compose the switching policy $\pi_{s \to s}$ as in Step $22$--$27$ of Algorithm~\ref{alg:scLTL}.

\begin{algorithm}
\caption{\small Solving Constrained sc\mbox{-}LTL Planning}
\begin{algorithmic}[1]\label{alg:scLTL}
    \small
    \STATE \textbf{Input:} LMDP $\mathcal{M}$, sc\mbox{-}LTL specifications $\phi$, $\psi$, constraint threshold $\tau$
    \STATE \textbf{Output:} Optimal policy $\pi_{s \to s}^\times$ for the constrained sc\mbox{-}LTL planning problem on the double product MDP $\mathcal{M}^\times$

    \vspace{0.5em}
    \STATE \textit{// Construct double product MDP}
    \STATE Construct DFAs $\mathcal{A}_\phi$, $\mathcal{A}_\psi$ corresponding to $\phi$, $\psi$
    \STATE Construct the double product MDP $\mathcal{M}^{\times} = \mathcal{M} \times \mathcal{A}_\phi \times \mathcal{A}_\psi$
    \STATE Define target sets $Z_\phi^\times$, $Z_\psi^\times$ in the double product MDP

    \vspace{0.5em}
    \STATE \textit{// Solve unconstrained reachability subproblems}
    \STATE Compute $\pi_\psi^\times := \arg\max_{\pi^\times} P_{\pi^\times}(\sigma^\times \models \lozenge Z_\psi^\times)$
    \STATE Compute $\pi_\phi^\times := \arg\max_{\pi^\times} P_{\pi^\times}(\sigma^\times \models \lozenge Z_\phi^\times)$
    \STATE \textbf{if} $\max_{\pi^\times}{P_{\pi^\times}(\sigma^\times \models \lozenge Z_\psi^\times)} < \tau$ \textbf{then return} \texttt{Infeasible}

    \vspace{0.5em}
    \STATE \textit{// Partition state space}
    \STATE Compute $Z := Z_\phi^\times \cup Z_\psi^\times$
    \STATE Let $S^0 := \{s^{\times} \in S^\times \mid P_{\max}(s^\times \models \lozenge Z) = 0\}$ 
    \STATE Let $S^? := S^\times \setminus (Z \cup S^0)$

    \vspace{0.5em}
    \STATE \textit{// Solve LP for pre\mbox{-}switching policy}
    \STATE Define decision variable $\mu(i,a)$ for $i \in S^?$
    \STATE Add flow constraints for all $j \in S^?$:
    \STATE \quad $\sum_a \mu(j,a) = \beta(j) + \sum_{i,a} \mu(i,a) T(i,a,j)$
    \STATE Add constraint satisfaction: $C_\psi(\mu) \ge \tau$
    \STATE Maximize objective: $R_\phi(\mu)$
    \STATE Solve LP to obtain $\mu^*$ and corresponding stationary policy $\pi_{\text{mix}}^\times$

    \vspace{0.5em}
    \STATE \textit{// Compose switching policy}
    \STATE Define $\pi_{s \to s}^\times(\langle s, q_\phi, q_\psi\rangle)$ as:
    \STATE \quad \textbf{if} $q_\phi \notin Z_\phi$ and $q_\psi \notin Z_\psi$ \textbf{then} $\pi^\times_{\text{mix}}$
    \STATE \quad \textbf{else if} $q_\phi \in Z_\phi$ and $q_\psi \notin Z_\psi$ \textbf{then} $\pi^\times_\psi$
    \STATE \quad \textbf{else if} $q_\phi \notin Z_\phi$ and $q_\psi \in Z_\psi$ \textbf{then} $\pi^\times_\phi$

    \STATE \textbf{return} $\pi^\times_{s \to s}$
\end{algorithmic}
\end{algorithm}
\section{Case Study}\label{sec:case}

To demonstrate how our method balances objective satisfaction and safety constraint, we construct a constrained planning scenario where satisfying both specifications requires non\mbox{-}trivial trade\mbox{-}off behavior. 
The experiment is conducted on a $4\times 5$ grid world with stochastic dynamics and strategically placed barriers. 

\subsection{Grid Environment and sc\mbox{-}LTL Objective}

\begin{figure}[t]
    \centering
    \includegraphics[width=0.55\linewidth]{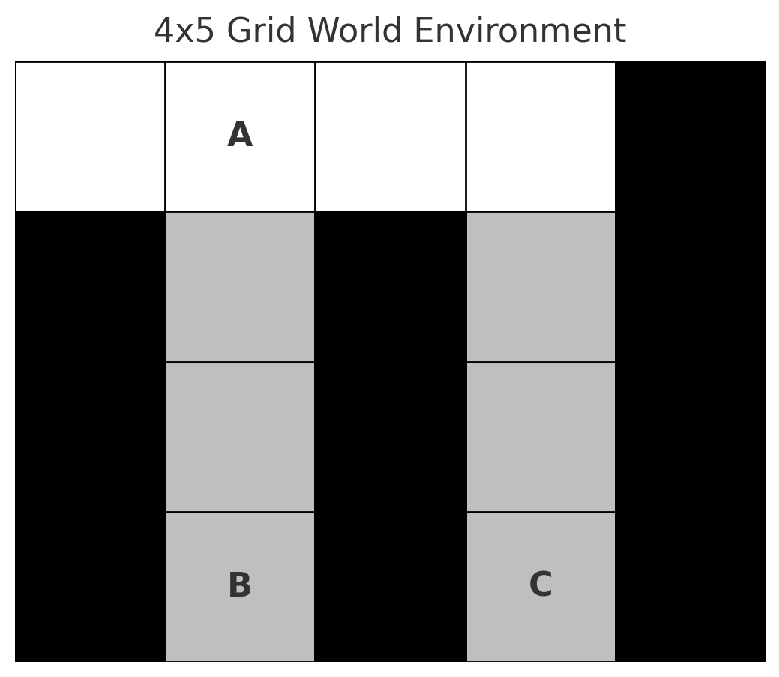}
    \caption{$4\times5$ grid world environment.
    White cells have deterministic transitions. Gray cells have stochastic transitions (vertical moves may slip left/right into adjacent barriers). Black cells are absorbing barriers.
    Letters $A$, $B$, and $C$ indicate labeled target cells.
    The gray corridor is adjacent to barriers, so any policy that attempts to reach $B$ or $C$ incurs a strictly positive probability of absorption. 
    Therefore, in the planning problem where reaching $B$ and reaching $C$ are specified separately as the objective and the constraint, respectively, the inherent trade\mbox{-}off must be explicitly accounted for.
}
    \label{fig:gridworld}
\end{figure}

To evaluate how our method balances objective satisfaction and safety constraints, we design a grid world environment where the two sc\mbox{-}LTL specifications are intentionally in probabilistic conflict, as illustrated in Fig.~\ref{fig:gridworld}.
The key idea is to place absorbing \emph{barrier states} in locations that force any policy pursuing one specification to incur a non\mbox{-}negligible risk of failing the other. 
As a result, no policy can satisfy both specifications with probability $1$. 
Achieving a better success rate for one inevitably reduces the probability of satisfying the other, thereby creating a setting where a trade\mbox{-}off is necessary.
\begin{itemize}
    \item \textbf{Dynamics:} A $4\times5$ grid world.
          Actions $\mathcal A=\{\mathrm N,\mathrm S,\mathrm E,\mathrm W\}$ attempt a unit move to the adjacent cell in the nominated direction; if the adjacent cell lies outside the grid, the agent remains in place.
          On \emph{white} states, all transitions are deterministic (no slip).
          On \emph{gray} states, each action goes to its intended successor with probability $0.8$ and to each orthogonal successor with probability $0.1$.
          The initial position is $(0,0)$.
    
    \item \textbf{Objective:} $\phi = \lozenge (A \land \lozenge B)$,  
          requiring the agent to visit a cell labeled $A$ and subsequently a cell labeled $B$.  

    \item \textbf{Safety constraint:} $P_{\pi}(\sigma\models\psi)\ge\tau$, where $\tau\in[0,1]$ is a user\mbox{-}specified safety level and $\psi=\lozenge C$ requires that the agent visit a cell labeled $C$ at least once.

    \item \textbf{Labels:}  
          $A$ at $(0,1)$, $B$ at $(3,1)$, and $C$ at $(3,3)$.  

    \item \textbf{Barrier states (black cells):}
          $(0,4)$, $(1,0)$, $(1,2)$, $(1,4)$, $(2,0)$, $(2,2)$, $(2,4)$, $(3,0)$, $(3,2)$, $(3,4)$.
          These barriers are absorbing: once entered, the agent cannot leave, and both $\phi$ and $\psi$ are violated.
\end{itemize}

This spatial arrangement forces the trade\mbox{-}off: reaching $C$ reliably requires traversing a risky corridor adjacent to barriers, which can jeopardize the sequence $A \rightarrow B$.  
Conversely, choosing a safer route to $A$ and $B$ bypasses $C$.

\subsection{Experiment}

We evaluate the approach on the conflict\mbox{-}rich $4{\times}5$ grid world introduced above. 
All experiments were implemented in Python on a Windows 11 machine equipped with an Intel i9-14900K processor.
The Mona package \cite{fuggitti-ltlf2dfa} was used to translate the sc\mbox{-}LTL objectives into DFAs. 

The experiment follows the pipeline in Algorithm~\ref{alg:scLTL}. 
Specifically:
\begin{itemize}
    \item $\mathcal{M}$ is the $4\times 5$ grid world with barrier states.
    \item $\phi = \lozenge (A \land \lozenge B)$, DFA size $|\mathcal{A}_\phi|=3$.
    \item $\psi = \lozenge C$, DFA size $|\mathcal{A}_\psi|=2$.
    \item The constraint threshold $\tau \in [0,1]$ is varied to control the desired safety level.
\end{itemize} 
Following Algorithm~\ref{alg:scLTL}, we first compute the unconstrained sub\mbox{-}policies $\pi_\phi$ and $\pi_\psi$ by value iteration, and then, for each threshold $\tau\in[0,1]$, solve the linear program to obtain the mixed (pre\mbox{-}switching) policy $\pi_{\mathrm{mix}}$, which is finally composed with the sub\mbox{-}policies into the switching policy $\pi_{s\to s}$.

A key structural finding is that $\pi_{\mathrm{mix}}$ randomizes at a single state only, namely at state $(0,1)$, while all other visited states remain deterministic.
Moreover, outside the state $(0,1)$, the chosen actions are invariant with respect to $\tau$.
Fig.~\ref{fig:mix_and_subpolicies}(a)--(b) visualize $\pi_{\mathrm{mix}}$ at two representative thresholds ($\tau=0.20$ and $0.60$). 
The green arrows indicate the two outgoing actions being mixed, with line width proportional to the mixing weights.
As $\tau$ increases, the mixing probabilities at $(0,1)$ smoothly shift to satisfy the constraint more, while the red arrows elsewhere (deterministic actions) are unchanged.
Fig.~\ref{fig:mix_and_subpolicies}(c)--(e) show the sub\mbox{-}policies $\pi_\phi$ and $\pi_\psi$ used to construct $\pi_{s\to s}$.

Fig.~\ref{fig:tradeoff-lp} shows the trade\mbox{-}off between the objective function and constraint function obtained by sweeping $\tau$ with step $0.01$.
The instance is feasible up to $\tau=0.64$. 
When $\tau>0.64$ no policy can satisfy the constraint. 
For small thresholds (approximately $\tau\le 0.21$), the resulting policy $\pi_{s\to s}$ coincides with the unconstrained deterministic policy $\pi_\phi$.
As $\tau$ increases toward the feasibility limit, the solution reallocates probability mass to improve $P(\sigma\models\psi)$ at the expense of $P(\sigma\models\phi)$, producing the expected trade\mbox{-}off.

\begin{figure}[t]
    \centering
    \begin{minipage}[t]{0.48\linewidth}
      \centering
      \includegraphics[width=\linewidth]{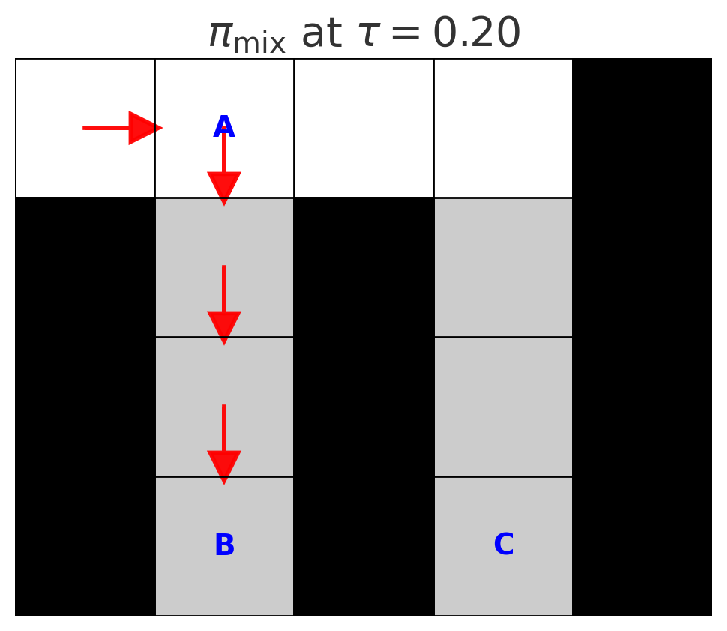}
      \par\vspace{2pt}\footnotesize (a)
    \end{minipage}\hfill
    \begin{minipage}[t]{0.48\linewidth}
      \centering
      \includegraphics[width=\linewidth]{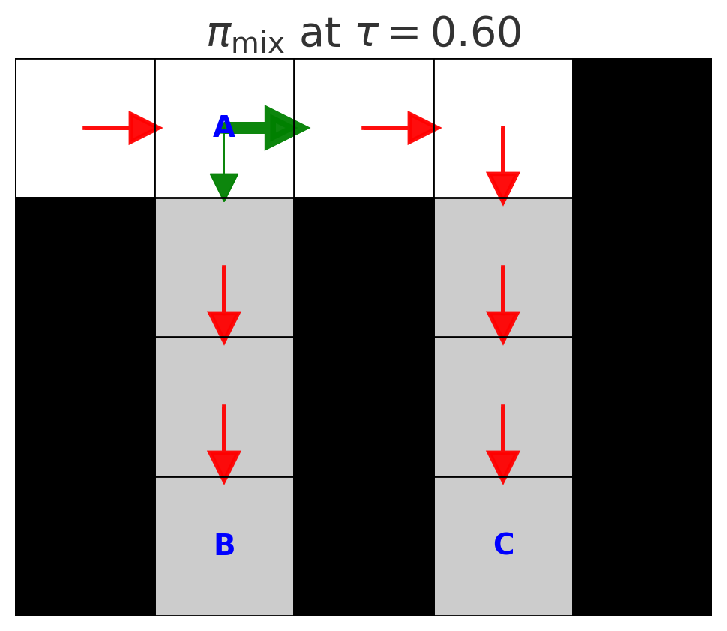}
      \par\vspace{2pt}\footnotesize (b)
    \end{minipage}
    
    \vspace{6pt}
    \begin{minipage}[t]{0.48\linewidth}
      \centering
      \includegraphics[width=\linewidth]{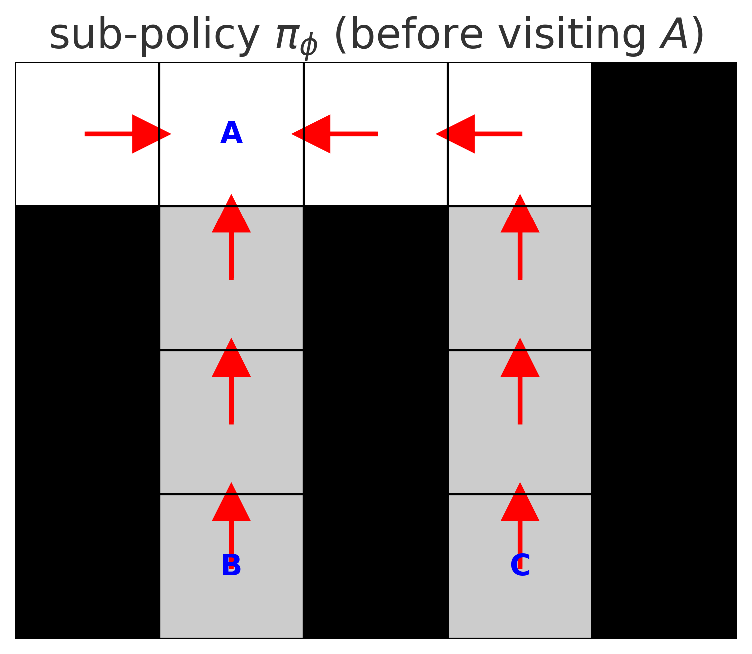}
      \par\vspace{2pt}\footnotesize (c)
    \end{minipage}\hfill
    \begin{minipage}[t]{0.48\linewidth}
      \centering
      \includegraphics[width=\linewidth]{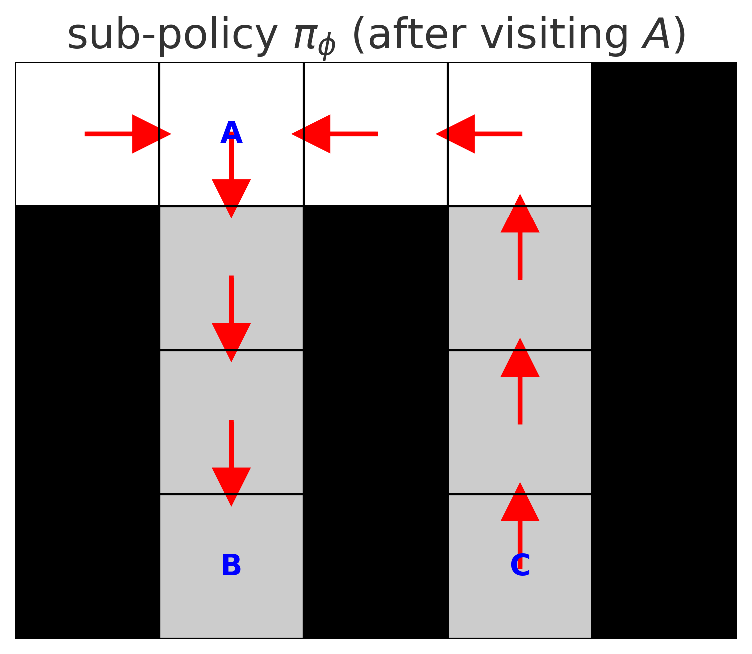}
      \par\vspace{2pt}\footnotesize (d)
    \end{minipage}
    
    \vspace{6pt}
    \begin{minipage}[t]{0.48\linewidth}
      \centering
      \includegraphics[width=\linewidth]{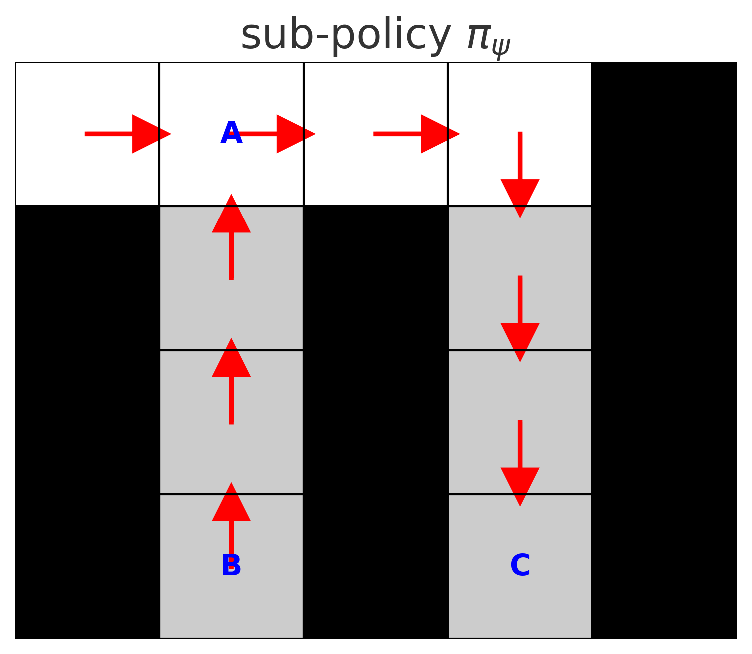}
      \par\vspace{2pt}\footnotesize (e)
    \end{minipage}
    
    \caption{(a)--(b) Pre\mbox{-}switching policy $\pi_{\mathrm{mix}}$ obtained from the linear program at two constraint tolerances ($\tau=0.20$ and $0.60$). 
    Randomization occurs only at $(0,1)$. 
    As $\tau$ increases, the mixing probabilities at $(0,1)$ adjust, while actions elsewhere remain deterministic and unchanged. 
    (c) Sub\mbox{-}policy $\pi_\phi$ before visiting $A$ (move toward $A$). 
    (d) Sub\mbox{-}policy $\pi_\phi$ after visiting $A$ (move toward $B$). 
    (e) Sub\mbox{-}policy $\pi_\psi$ (reach $C$). 
    Red arrows denote deterministic actions; green arrows denote mixed actions with line width proportional to mixing probabilities. 
    Together these sub\mbox{-}policies realize the switching policy $\pi_{s\to s}$, whose illustration is given in Fig.~\ref{fig:switch-policy}.}
    \label{fig:mix_and_subpolicies}
\end{figure}

\begin{figure}[t]
  \centering
  \includegraphics[width=0.9\linewidth]{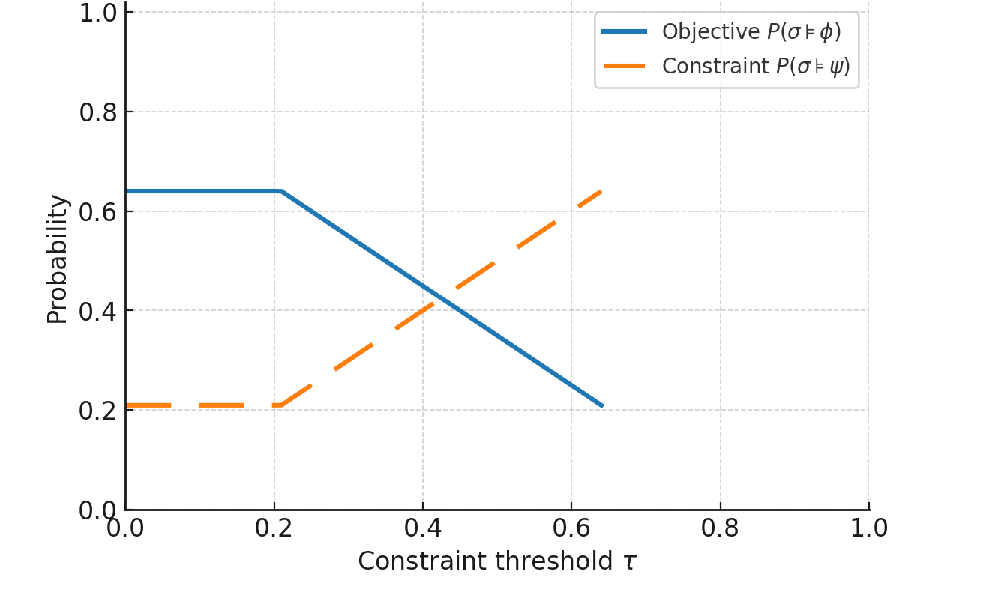}
  \caption{Trade\mbox{-}off between the objective and the constraint as the threshold $\tau$ varies.
  Curves are computed via Algorithm~\ref{alg:scLTL} for $\tau\in[0,1]$ with a step size of $0.01$.
  The problem is feasible up to $\tau=0.64$; beyond this, no policy can satisfy the constraint.
  For small thresholds (approximately $\tau\le 0.21$), the solution coincides with the unconstrained deterministic policy $\pi_\phi$.
  As $\tau$ increases, the policy shifts to satisfy $\psi$, trading off the performance in satisfying $\phi$.}
  \label{fig:tradeoff-lp}
\end{figure}

\subsection{Complexity and Runtime}
The complexity of Algorithm~\ref{alg:scLTL} is polynomial in the number of
decision variables and constraints. 
We denote this by $\mathrm{poly}(m,n)$.
For the double product MDP built from an LMDP with size $(|S|,|\mathcal{A}|)$
and DFAs of sizes $|Q_\phi|$ and $|Q_\psi|$, we have
\[
m = |S|\,|\mathcal{A}|\,|Q_\phi|\,|Q_\psi|,\qquad
n = |S|\,|Q_\phi|\,|Q_\psi|.
\]

Algorithm~\ref{alg:scLTL} computes the mixed, optimistic, and
pessimistic policies. Solving for the mixed policy dominates in cost because it uses more decision variables. 
We solve all LPs with the HiGHS linear optimizer under default settings. 
On our machine, one LP solve per $\tau$ takes $\approx 0.002\,\mathrm{s}$ on average, and sweeping $\tau\in[0,0.64]$ with
step $0.01$ takes $\approx 0.13\,\mathrm{s}$.

\section{Conclusion}
We have addressed the challenging problem of solving the constrained sc\mbox{-}LTL planning problem. 
Our work establishes that, despite the non\mbox{-}Markovian nature of the optimal policy in the original MDP, one can construct an equivalent optimal stationary policy on a double product MDP. 
This structural insight enables us to solve the problem via a linear program over reachability objectives and constraints. 
We demonstrated that switching between different stationary policies suffices to capture the trade\mbox{-}offs between objectives and constraints, and this switching is inherently captured in the double product MDP's structure. 
Our results lay the groundwork for scalable and theoretically sound methods for constrained temporal logic planning in MDPs. 
\section*{Appendix}
In this appendix, we formally prove Lemma~\ref{thm: oc}. 
The proof is based on the fact that all recurrent states in the MDP are contained in end components. 
\begin{definition} 
Let $\mathcal{M} = (S, A, T, \beta)$ be an MDP.  
An \emph{end component} of $\mathcal{M}$ is a pair $(E,A_E)$ such that:
\begin{itemize}
    \item $E \subseteq S$ is non\mbox{-}empty, and $A_E: E \to 2^A$ assigns to each $s\in E$ a non\mbox{-}empty set $A_E(s)\subseteq A(s)$;
    \item Each end component is closed. For all $s \in E$ and $a \in A_E(s)$, every successor $s'$ with $T(s,a,s')>0$ is also in $E$;
    \item Each end component is strongly connected. In the directed graph induced by $E$ and actions $A_E$, every state in $E$ is reachable from every other state in $E$ via actions in $A_E$.
\end{itemize}
\end{definition}

We denote by $\mathrm{EC}(\mathcal{M})$ the set of all end components of $\mathcal{M}$.
Let $\mathrm{EC}_{\pi_h'}(\mathcal{M}) \subseteq \mathrm{EC}(\mathcal{M})$ 
denote the set of end components that are valid under policy $\pi_h'$. 
$\mathrm{EC}_{\pi_h'}(\mathcal{M})$ contains all the recurrent states under policy $\pi_h'$. 

{Throughout this appendix, probabilities are taken with respect to the induced path measure $P_{\pi}$ defined in Section~\ref{sec:prelim}, under the fixed initial distribution $\beta$ of $\mathcal M$.}

\hspace*{1em}\emph{Proof of Lemma~\ref{thm: oc}: }
Applying the flow constraint to $\pi_h'$ gives, for all $x\in S^{?}$,
\begin{align}
\mu_{\pi_h'}(x)
&=\beta(x)
+\sum_{y\in S^{?}}\sum_{a\in A(y)} \mu_{\pi_h'}(y,a)\,T(y,a,x) \notag\\
&=\beta(x)
+\sum_{y\in S^{?}}\sum_{a\in A(y)} \mu_{\pi_h'}(y)\,\pi_{\mathrm{mix}}(y,a)\,T(y,a,x)
\label{eq:flow_hprime_using_pimix}
\end{align}
Similarly, applying the flow constraint to the stationary policy $\pi_{\mathrm{mix}}$ yields
\begin{align}\label{eq:flow_mix}
\mu_{\pi_{\mathrm{mix}}}(x)
=\beta(x)
+\sum_{y\in S^{?}}\sum_{a\in A(y)} \mu_{\pi_{\mathrm{mix}}}(y)\,\pi_{\mathrm{mix}}(y,a)\,T(y,a,x).
\end{align}

Let $S^{?}_{>0}\coloneqq\{x\in S^{?}\mid \mu_{\pi_h'}(x)>0\}$.
For any $x\in S^{?}\setminus S^{?}_{>0}$ we have $\mu_{\pi_h'}(x)=0$.
Moreover, since $\pi_{\mathrm{mix}}$ is constructed by normalizing $\mu_{\pi_h'}(\cdot,\cdot)$ on $S^{?}$,
there is no inflow into such $x$ from any $(y,a)$ with $\mu_{\pi_h'}(y,a)>0$, and hence $\mu_{\pi_{\mathrm{mix}}}(x)=0$ as well.
Therefore it suffices to compare the state occupancy measure on $S^{?}_{>0}$.

In matrix form, let $\mu$ and $\beta$ be row vectors in $\mathbb{R}^{1\times |S^{?}_{>0}|}$ 
\begin{align}
&\mu_{\pi_h'}=\beta +\mu_{\pi_h'}P_{S^{?}_{>0}\rightarrow S^{?}_{>0}}^{\pi_\mathrm{mix}}, \notag \\
&\mu_{\pi_\mathrm{mix}}=\beta +\mu_{\pi_\mathrm{mix}}P_{S^{?}_{>0}\rightarrow S^{?}_{>0}}^{\pi_\mathrm{mix}}, 
\label{eq:two_same_linear_systems}
\end{align}
where $P_{S^{?}_{>0}\rightarrow S^{?}_{>0}}^{\pi_\mathrm{mix}}$ is the substochastic transition submatrix induced by $\pi_\mathrm{mix}$ on $S^{?}_{>0}$.

Moreover, by the construction of $\pi_h'$, the pre\mbox{-}switch occupancies satisfy $\mu_{\pi_h'}(x)<\infty$ for all $x\in S^{?}$, hence in particular for all $x\in S^{?}_{>0}$.
This rules out any recurrent (closed) class contained in $S^{?}_{>0}$ that is reachable from $\beta$ under
$P_{S^{?}_{>0}\rightarrow S^{?}_{>0}}^{\pi_\mathrm{mix}}$. 
Otherwise, with positive probability the path would remain in $S^{?}$ forever,
implying $\mu_{\pi_h'}(x)=\infty$ for states in the class, a contradiction.
Therefore, all states in $S^{?}_{>0}$ are transient under $\pi_\mathrm{mix}$, and thus
$(I-P_{S^{?}_{>0}\rightarrow S^{?}_{>0}}^{\pi_\mathrm{mix}})$ is invertible.

Consequently, both $\mu_{\pi_h'}$ and $\mu_{\pi_{\mathrm{mix}}}$ are the unique solution to the same equation,
\begin{align}
    \mu_{\pi_h'}
    &=\beta (I-P_{  S^{?}_{>0}\rightarrow   S^{?}_{>0}}^{\pi_\mathrm{mix}})^{-1} \notag \\
    &=\mu_{\pi_\mathrm{mix}}.
\end{align}

Finally, for any $x\in S^{?}$ and $a\in A(x)$,
\begin{align}
\mu_{\pi_{\mathrm{mix}}}(x,a)  
&=\mu_{\pi_{\mathrm{mix}}}(x)\pi_{\mathrm{mix}}(x,a) \notag \\
&=\mu_{\pi_h'}(x)\pi_{\mathrm{mix}}(x,a) \notag \\
&=\mu_{\pi_h'}(x,a). 
\end{align}
This completes the proof of Lemma~\ref{thm: oc}. \hfill $\blacksquare$


\end{document}